\newtheorem{thm}{Theorem}
\newtheorem{cor}{Corollary}
\newtheorem{lemma}{Lemma}
\newtheorem{defn}{Definition}
\theoremstyle{plain}
\newcommand{\sX}{{\cal X}}
\newcommand{\sF}{{\cal F}}
\newcommand{\reals}{\mathbb{R}}
\newcommand{\ind}[1]{1_{\{#1\}}}
\newcommand{\Lp}{L_1}
\newcommand{\Lm}{L_{-1}}
\renewcommand{\a}{\alpha}
\newcommand{\HLa}{H_{L,\alpha}}
\newcommand{\CLam}{C_{L,\alpha}^-}
\newcommand{\topp}{t\in\reals:t(\eta-\a) \le 0}
\newcommand{\nuLa}{\nu_{L,\alpha}}
\newcommand{\muLa}{\mu_{L,\alpha}}
\newcommand{\psiLa}{\psi_{L,\alpha}}
\newcommand{\eps}{\epsilon}
\newcommand{\etaeq}{\eta \in[0,1]:|\eta-\a|=\eps}
\newcommand{\co}{\mathop{\mbox{\rm co}}}
\newcommand{\Epi}{\mathop{\mbox{\rm Epi}}}
\newcommand{\sign}{\mathop{\mathrm{sign}}}
\newcommand{\bd}{\begin{description}}
\newcommand{\ed}{\end{description}}
\newcommand{\beas}{\begin{eqnarray*}}
\newcommand{\eeas}{\end{eqnarray*}}
\newcommand{\vt}{\vartheta_\alpha(\eta)}
\newcommand{\sgnne}{\sign(t) \ne \sign(\eta-\a)}
\renewcommand{\d}{\delta}
\newcommand{\dt}{\tilde{\delta}}
\newcommand{\Cad}{C_\a(\eta,t)-C_\a^*(\eta)}
\newcommand{\CLd}{C_L(\eta,t)-C_L^*(\eta)}
\newcommand{\beq}{\begin{equation}}
\newcommand{\eeq}{\end{equation}}
\newcommand{\g}{\gamma}
\newcommand{\sfrac}[2]{\mbox{$\frac{#1}{#2}$}}
\newcommand{\HLaa}{H_{L_\a,\a}}
\newcommand{\nuLaa}{\nu_{L_\a,\a}}
\newcommand{\bcase}{\left\{ \begin{array}{ll} }
\newcommand{\ecase}{\end{array} \right. }
\newcommand{\third}{\mbox{$\frac13$}}
\newcommand{\fnhat}{\widehat{f}_n}
\newcommand{\half}{\mbox{$\frac12$}}
\begin{document}

\title{Calibrated Surrogate Losses for Classification with
Label-Dependent Costs}

\author{Clayton Scott \\ Department of Electrical Engineering and
Computer Science \\ Department of Statistics \\ University of Michigan,
Ann Arbor}


\maketitle

\begin{abstract}
We present surrogate regret bounds for arbitrary surrogate losses in the
context of binary classification with label-dependent costs. Such
bounds relate a classifier's risk, assessed with respect to a surrogate
loss, to its cost-sensitive classification risk. Two approaches to
surrogate regret bounds are developed.  The first is
a direct generalization of \citet{bartlett06}, who focus on margin-based
losses and cost-insensitive classification, while the second adopts the
framework of \citet{steinwart07} based on calibration functions.
Nontrivial surrogate regret bounds are shown to exist precisely when the
surrogate loss satisfies a ``calibration" condition that is easily
verified for many common losses. We apply this theory to the class of
uneven margin losses, and characterize when these losses are properly
calibrated.  The uneven hinge, squared error, exponential, and sigmoid
losses are then treated in detail.
\end{abstract}

\section{Introduction}

Binary classification is concerned with the prediction of a label $Y \in
\{-1,1\}$ from a feature vector $X$ by means of a classifier.  A
classifier can be represented as a mapping $x \mapsto \sign(f(x))$ where
$f$ is a real-valued decision function. The goal of classification is to
learn $f$ from a training sample $(X_1,Y_1), \ldots, (X_n,Y_n)$. When the
cost of misclassifying $X$ is not dependent on $Y$, the performance of $f$
is typically measured by the risk $R(f) = E_{X,Y}[\ind{Y \ne f(X)}]$.
Since minimization of the empirical risk is usually intractable, it is
common in practice to instead minimize the empirical version of the
$L$-risk $R_L(f) = E_{X,Y}[L(Y,f(X))]$, where $L(y,t)$ is a surrogate
loss, chosen for its computational qualities such as convexity.

\citet*{bartlett06} study conditions under which consistency with respect
to an $L$-risk implies consistency with respect to
the original risk $R(f)$.  To be more specific, let $R^*$ and $R_L^*$
denote the minimal risk and $L$-risk, respectively, over all possible
decision functions. \citeauthor{bartlett06} examine when there
exists an
invertible function $\theta$ with $\theta(0)=0$ such that
\begin{equation}
\label{eqn:theta}
R(f) - R^* \le \theta(R_L(f) - R_L^*)
\end{equation}
for all $f$ and all distributions on $(X,Y)$.  We refer to such a
relationship as a surrogate regret bound, since $R(f) - R^*$ and $R_L(f) -
R_L^*$ are known as the regret and surrogate regret, respectively.

\citeauthor{bartlett06} study margin losses, which have the form $L(y,t) =
\phi(yt)$ for some $\phi:\reals \to [0,\infty)$.  They show that
non-trivial surrogate regret bounds exist precisely when $L$ is
classification-calibrated, which is a technical condition they develop.

In this paper we extend the work of \citeauthor{bartlett06} in two ways.
First, we consider risks that account for label-dependent
misclassification costs.  Second, we study arbitrary surrogate losses, not
just margin losses.  We show that non-trivial surrogate regret bounds
exist when $L$ is $\a$-classification calibrated, where $\a \in (0,1)$
represents the misclassification cost asymmetry.  This condition is a
natural generalization of classification calibrated.  We also give results
that facilitate the calculation of these bounds, and characterization of
which losses are $\a$-classification calibrated.

\citet{steinwart07} extends the work of \citeauthor{bartlett06} in a
very general way that encompasses several supervised and unsupervised
learning problems.  He applies this framework to cost-sensitive
classification, but restricts his attention to margin-based losses. We
apply this framework to derive surrogate regret bounds for cost-sensitive
classification and arbitrary losses.  The results obtained in this manner
are shown to be equivalent to the bounds obtained by generalizing the
approach of \citeauthor{bartlett06}

\sloppy \citet{reid09icml, reid09tr} also study $\a$-classification
calibrated losses and derive surrogate regret bounds for cost-sensitive
classification. Their focus is on class probability estimation, and unlike
the present work, they impose certain conditions on the surrogate loss,
such as differentiability everywhere. Therefore they do not address
important losses such as the hinge loss. In addition, their bounds are not
in the form of (\ref{eqn:theta}), but rather are stated implicitly. We
also note that their examples of surrogate regret bounds
\citep{reid09icml} consider only margin losses.

Additional comparisons to the above cited and other works are given later.
Because we allow for asymmetry in both the misclassification costs and
surrogate loss, unlike the original analysis of \citet{bartlett06},
certain aspects of our analysis are necessarily different.

A motivation for this work is to understand uneven margin losses, which
have the form
$$
L(y,t) = \ind{y=1}\phi(t) + \ind{y=-1} \beta \phi(-\g t)
$$
for some $\phi:\reals \to [0,\infty)$ and $\beta, \g > 0$.  Various
instances of such losses have appeared in the literature (see Sec.
\ref{sec:uml} for specific references), primarily as a heuristic
modification of margin losses to account for cost asymmetry or unbalanced
datasets.  They are computationally attractive because they can typically
be optimized by modifications of margin-based algorithms.  However,
statistical aspects of these losses have not been studied.  We
characterize when they are $\a$-classification calibrated and compute
explicit surrogate regret bounds for four specific examples of $\phi$.

When applied to uneven margin losses, our work has practical implications 
for adapting well-known algorithms, such as Adaboost and support vector 
machines, to settings with unbalanced data or label-dependent costs. These 
are discussed in the concluding section.

The rest of the paper is organized as follows.  Section \ref{sec:bnd}
develops a general framework for surrogate regret bounds that handles
label-dependent costs and arbitrary surrogate losses.  The special case of
cost-insensitive classification with general losses is considered, and a
refined treatment is also given for the case of convex losses.  Section
\ref{sec:cal} relates our problem to the general framework of
\citet{steinwart07}, and provides an alternate, yet ultimately equivalent
approach to surrogate regret bounds using so-called calibration functions.
Section \ref{sec:uml} examines uneven margin losses in detail, including
four specific instances of $\phi$ corresponding to the hinge, squared
error, exponential, and sigmoid functions. A concluding discussion is
offered in Section \ref{sec:disc}. Supporting lemmas and additional
details may be found in two appendices.

\section{Surrogate Losses and Regret Bounds}
\label{sec:bnd}

Let $(X,Y)$ have distribution $P$ on $\sX\times\{-1,1\}$.  Let $\sF$
denote the set of all measurable functions $f: \sX\to\reals$.  Every
$f\in\sF$ defines a classifier by the rule $x\mapsto \sign(f(x))$.  We
adopt the convention $\sign(0) = -1$

A loss is a measurable function $L:\{-1,1\}\times\reals \to[0,\infty)$.
Any loss can be written
$$
L(y,t) = \ind{y=1}\Lp(t)+\ind{y=-1}\Lm(t).
$$
We refer to $\Lp$ and $\Lm$ as the partial losses of $L$.  The $L$-risk of
$f$
is $R_L(f): = E_{X,Y}[L(Y,f(X))]$.  The optimal $L$-risk is
$R_L^*:=\inf_{f\in\sF} R_L(f)$.
The cost-sensitive classification loss with cost parameter $\a\in(0,1)$ is
$$
U_\a(y,t):=(1-\a)\ind{y=1}\ind{t\le0} +\a\ind{y=-1}\ind{t>0}.
$$

When $L=U_\a$, we write $R_\a(f)$ and $R_\a^*$ instead of
$R_{U_\a}(f)$ and $R_{U_\a}^*$. Although other parametrizations of
cost-sensitive classification losses are possible, this one is
convenient because an optimal classifier is $\sign(\eta(x)-\a)$
where $\eta(x):=P(Y=1 |X=x)$. See Lemma \ref{lemmaX}, part 1. We are
motivated by applications where it is desirable to minimize the
$U_\a$-risk, but the empirical $U_\a$-risk cannot be optimized
efficiently.  In such situations it is common to minimize the
(empirical) $L$-risk for some surrogate loss $L$ that has a
computationally desirable property such as differentiability or
convexity.

Define the conditional $L$-risk
$$
C_L(\eta,t):=\eta\Lp(t) + (1-\eta)\Lm(t)
$$
for $\eta\in[0,1],t\in\reals$, and the optimal conditional $L$-risk
$C_L^*(\eta)=\inf_{t\in\reals}C_L(\eta,t)$ for $\eta\in[0,1]$. These are
so-named because
$R_L(f) = E_X[C_L(\eta(X),f(X))]$ and
$R_L^*(\eta)= E_X[C_L^*(\eta(X))]$. Note that we use $\eta$ to denote both
the function $\eta(x) = P(Y=1|X=x)$ and a scalar $\eta \in [0,1]$. The
meaning should be clear from context. When $L=U_\a$, we write
$C_\a(\eta,t)$ and $C_\a^*(\eta)$ for $C_{U_\a}(\eta,t)$
and $C_{U_\a}^*(\eta)$. Measurability issues with these and other
quantities are addressed in \cite{steinwart07}.

This section has three parts.  In \ref{sec:nu} we extend the work of
\citet{bartlett06}, on surrogate regret bounds for margin losses and
cost-insensitive classification, to general losses and cost-sensitive
classification.  In \ref{sec:costins} we specialize our results to the
important special case of cost-insensitive classification with general
losses, and in \ref{sec:conv} we present some results for the case of
convex partial losses.

\subsection{$\a$-classification calibration and surrogate regret bounds}
\label{sec:nu}

For $\a\in(0,1)$ and any loss $L$, define
$$
\HLa(\eta):=\CLam(\eta)-C_L^*(\eta)
$$
for $\eta\in[0,1]$, where
$$
\CLam(\eta):=\inf_{\topp} C_L(\eta,t).
$$
Note that $\HLa(\eta)\ge0$ for all $\eta\in [0,1]$.
\begin{defn}
We say $L$ is {\em $\a$-classification calibrated}, and write $L$ is
$\a$-CC, if
$\HLa(\eta)>0$ for all $\eta\in [0,1], \eta \ne\a$.
\end{defn}
Intuitively, $L$ is $\a$-CC if, for all $x$ such that $\eta(x) \ne\a$, the
value of $t=f(x)$ minimizing the conditional $L$-risk has the same sign as the optimal predictor $\eta(x) - \a$.
Denote $B_\a:=\max(\a,1-\a)$.  Note that the  regret, $R_\a(f)-R_\a^*$,
and the conditional regret, $C_\a(\eta,t)-C_\a^*(\eta)$, both take value
in
$[0,B_\a]$.  This can be seen from Lemma \ref{lemmaX}, part 1.
Next, define
$$
\nuLa(\eps) = \min_{\etaeq}\HLa(\eta)
$$
for $\eps\in[0,B_\a]$.  Notice that for $\a\le\half$,
\beq
\label{eqn:nu1}
\nuLa(\eps)=\left\{
\begin{array}{ll}
\min(\HLa(\a+\eps),\HLa(\a-\eps)), & 0\le \eps\le\a\\
\HLa(\a+\eps),&\a<\eps\le 1-\a
\end{array}
\right.
\eeq
and for $\a\ge\half$,
\beq
\label{eqn:nu2}
\nuLa(\eps)=\left\{
\begin{array}{ll}
\min(\HLa(\a+\eps),\HLa(\a-\eps)), & 0\le\eps\le 1-\a \\
\HLa(\a-\eps), & 1-\a<\eps\le\a.
\end{array}
\right.
\eeq
Finally, define $\psiLa(\eps)=\nuLa^{**}(\eps)$ for $\eps\in[0,B_\a]$,
where
$g^{**}$ denotes the Fenchel-Legendre biconjugate of $g$.  The biconjugate
of $g$ is the largest lower semi-continuous function that is $\le g$, and
is defined by
$$
\Epi g^{**}=\overline{\co\Epi g},
$$
where $\Epi g=\{(r,s):g(r)\le s\}$ is the epigraph of $g$, $\co$ denotes
the
convex hull, and the bar indicates set closure.
Since $\nuLa(0)=0$ (Lemma \ref{lemmaX}, part 4), $\nuLa$ is nonnegative,
and
$\psiLa$ is convex, we know $\psiLa(0) = 0$ and $\psiLa$ is nondecreasing.

\begin{thm}
\label{thm:nu}
Let $L$ be a loss and $\a \in (0,1)$.
\bd
\item[1.] For all $f \in \sF$ and all distributions $P$,
$$
\psiLa (R_\a(f)-R_\a^*) \le R_L(f)-R_L^*.
$$
\item [2.] $\psiLa$ is invertible if and only if $L$ is $\a$-CC.
\ed
\end{thm}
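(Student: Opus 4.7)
For Part 1, the plan is to establish a pointwise bound and then integrate. I would first prove
\[\nuLa\bigl(\Cad\bigr)\le\CLd \qquad\forall\,\eta\in[0,1],\,t\in\reals.\]
A direct computation of $C_\a$ shows $\Cad\in\{0,|\eta-\a|\}$, and in the case $\Cad=|\eta-\a|$ we have $\eta\ne\a$ and $t$ lies in the constraint set $\{t:t(\eta-\a)\le 0\}$ defining $\CLam(\eta)$. The case $\Cad=0$ is trivial since $\nuLa(0)=0$; in the other case $C_L(\eta,t)\ge\CLam(\eta)$, so $\CLd\ge\HLa(\eta)\ge\nuLa(|\eta-\a|)$, the last inequality being the definition of $\nuLa$ as a minimum. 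Because $\psiLa=\nuLa^{**}\le\nuLa$, the same bound holds with $\psiLa$. Evaluating at $\eta=\eta(X)$, $t=f(X)$, taking $E_X$, and applying Jensen's inequality (using convexity of $\psiLa$) to pull $\psiLa$ outside the expectation then yields $\psiLa(R_\a(f)-R_\a^*)\le R_L(f)-R_L^*$.

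For Part 2, I would first observe that since $\psiLa$ is convex, nondecreasing, and $\psiLa(0)=0$, invertibility on $[0,B_\a]$ is equivalent to $\psiLa(\eps)>0$ for every $\eps\in(0,B_\a]$: a convex nondecreasing function vanishing at $0$ and at some $\eps_0>0$ is identically $0$ on $[0,\eps_0]$. The direction ``$\psiLa$ invertible $\Rightarrow L$ is $\a$-CC'' I would prove by contrapositive. If $\HLa(\eta_0)=0$ for some $\eta_0\ne\a$, then $\nuLa(|\eta_0-\a|)\le\HLa(\eta_0)=0$, so $\psiLa(|\eta_0-\a|)\le 0$, violating invertibility.

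For the converse, $\a$-CC of $L$ immediately gives $\nuLa(\eps)>0$ on $(0,B_\a]$, since $\nuLa(\eps)$ is a minimum of at most two values $\HLa(\a\pm\eps)$, each strictly positive by hypothesis. To lift this to $\psiLa(\eps)>0$, I would argue by contradiction: if $\psiLa(\eps_0)=0$ for some $\eps_0>0$, then $(\eps_0,0)\in\overline{\co\,\Epi\nuLa}$, realized by Carath\'eodory's theorem in $\reals^2$ as a limit of convex combinations of at most three epigraph points. Passing to convergent subsequences and using lower semicontinuity of $\nuLa$---which follows from the continuity of $\HLa$ on each of $[0,\a)$ and $(\a,1]$, itself a consequence of the concavity in $\eta$ of both $C_L^*$ and $\CLam$ on those intervals---every index $i$ with positive limiting weight $\lambda_i^*$ must satisfy $\nuLa(\eps_i^*)=0$, hence $\eps_i^*=0$, forcing $\eps_0=\sum\lambda_i^*\eps_i^*=0$, a contradiction.

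The main obstacle is precisely this last step. The biconjugate $\nu\mapsto\nu^{**}$ can in general collapse a strictly positive function to zero on an initial segment; blocking that here relies on the lsc/concavity structure of $\nuLa$, which I expect is encapsulated in a supporting lemma rather than proved bare-handed in the main body.
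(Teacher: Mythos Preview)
Your proposal is correct and follows essentially the same approach as the paper: Part 1 via a pointwise bound $\nuLa(\Cad)\le\CLd$ (the paper unrolls this as a chain of inequalities after Jensen, but the content is identical), and Part 2 by reducing invertibility to $\psiLa>0$ on $(0,B_\a]$ and using lower semicontinuity of $\nuLa$ to pass from $\nuLa>0$ to $\psiLa>0$. Your expectation about the supporting lemma is exactly right: the paper packages the lsc of $\nuLa$ (including the delicate points $0$ and $\min(\a,1-\a)$, which your concavity sketch does not quite reach) into a separate lemma, and then invokes closedness of $\co\Epi\nuLa$ directly rather than spelling out Carath\'eodory and subsequences as you do.
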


\begin{proof}
For the first part, by Lemma \ref{lemmaX} part 1 we
know
\beas
R_\a(f)-R_\a^* &=& E_X[\ind{\sign f(X)
\ne \sign (\eta(X)-\a)}|\eta(X)-\a|] \\
&\le& E_X[\ind{f(X) (\eta(X)
-\a) \le 0}|\eta(X)-\a|].
\eeas
Then
\beas
\lefteqn{\nuLa^{**}(R_\a(f)-R_\a^*)\le
E_X[\nuLa^{**}(\ind{f(X)(\eta(X)-\a)\le0}
|\eta(X)-\a|)]} \\
&\le& E_X[\nuLa(\ind{f(X)(\eta(X)-\a)\le0}
|\eta(X)-\a|)] \\
&=& E_X[\ind{f(X)(\eta(X)-\a)\le0}\nuLa(
|\eta(X)-\a|)] \\
&=& E_X \left[\ind{f(X)(\eta(X)-\a)\le0}
\min_{\eta'\in[0,1]:|\eta'-\a|=|\eta(X)-\a|}
\HLa(\eta') \right] \\
&\le& E_X[\ind{f(X)(\eta(X)-\a)\le0}
\HLa (\eta(X))] \\
&=& E_X \left[ \ind{f(X)(\eta(X)-\a)\le0}
\left(\inf_{t:t(\eta(X)-\a)\le0} C_L(\eta(X),
t) - C_L^*(\eta(X)) \right) \right] \\
&\le& E_X[C_L(\eta(X),f(X))-C_L^*(\eta(X))]
\\
&=& R_L(f)-R_L^*.
\eeas
The first inequality is Jensen's, and the first equality follows from
$\nuLa(0) = 0$.

Now consider the second part.  If $\psiLa$ is invertible,
then $\psiLa(\eps)>0$ for all $\eps \in [0,B_\a]$,
because $\psiLa(0)=0$ and $\psiLa$ is nonnegative.
Since $\psiLa \le \nuLa$, we know $\nuLa(\eps)>0$ for all $\eps \in
(0,B_\a]$, which by definition of $\nuLa$ implies $\HLa (\eta) > 0 $ for
all $\eta\ne\a$. Thus $L$ is $\a$-CC.

Conversely, now suppose $L$ is $\a$-CC. We claim that $\psiLa(\eps) > 0 $
for all $\eps \in (0,B_\a]$. To see this, suppose $\psiLa(\eps) = 0$.
Since $\nuLa$ is lower semi-continuous, $\Epi \nuLa$ and $\co \Epi \nuLa$
are closed sets.  Therefore, $(\eps,0)$ is a convex combination of points
in $\Epi \nuLa$. Since $L$ is $\a$-CC, we know $\nuLa (\eps) > 0 $ for all
$\eps \in (0,B_\a]$. Therefore $\eps = 0 $. This proves the claim.

Since $\psiLa(0) = 0$ and $\psiLa$ is convex and nondecreasing, it follows
that $\psiLa$ is strictly increasing. Since $\psiLa$ is continuous (Lemma
\ref{lemmaX}, part 5), we conclude that $\psiLa$ is invertible.
\end{proof}

If $L$ is $\a$-CC, then $R_\a(f)-R_\a^* \le \psiLa^{-1} (R_L(f)-R_L^*)$.
Since $\psiLa(0)=0$ and $\psiLa$ is nondecreasing, the same is true of
$\psiLa^{-1}$. As a result, we can show that an algorithm that is
consistent for the $L$-risk is also consistent for the $\a$ cost-sensitive
classification risk. Such an approach was employed by \citet{zhang04} and
\citet{steinwart05} to prove consistency, for the cost-insensitive risk,
of different algorithms based on surrogate losses.

\begin{cor}
Suppose $L$ is $\a$-CC.
\bd
\item[1.] If $R_L(f_i)-R_L^* \to 0$ for some sequence of decision
functions $f_i$, then $R_\a(f_i)-R_\a^* \to 0$.
\item[2.] Let $\fnhat$ be a classifier based on the random sample
$(X_1,Y_1), \ldots, (X_n,Y_n)$. If $R_L(\fnhat)-R_L^* \to 0$ in
probability,
then $R_\a(\fnhat)-R_\a^* \to 0$ in probability. If $R_L(\fnhat)-R_L^* \to
0$
with probability one, then $R_\a(\fnhat)-R_\a^* \to 0$ with probability
one.
\ed
\end{cor}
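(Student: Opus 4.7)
The plan is to combine the inverted form of the bound from Theorem \ref{thm:nu} with the continuity of $\psiLa^{-1}$ at $0$. Since $L$ is $\a$-CC, part 2 of Theorem \ref{thm:nu} gives that $\psiLa$ is invertible, and the proof there shows that $\psiLa$ is convex, nondecreasing, continuous (Lemma \ref{lemmaX}, part 5), and strictly increasing with $\psiLa(0)=0$. Inverting the inequality in part 1 of Theorem \ref{thm:nu} yields, for every $f \in \sF$ and every distribution $P$,
$$
R_\a(f)-R_\a^* \le \psiLa^{-1}\bigl(R_L(f)-R_L^*\bigr).
$$
Moreover, $\psiLa^{-1}$ is continuous on its domain, is nondecreasing, and satisfies $\psiLa^{-1}(0)=0$.

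For part 1, applying the displayed inequality to $f_i$ gives $R_\a(f_i)-R_\a^* \le \psiLa^{-1}(R_L(f_i)-R_L^*)$. Since $R_L(f_i)-R_L^* \to 0$ and $\psiLa^{-1}$ is continuous at $0$ with $\psiLa^{-1}(0)=0$, the right-hand side tends to $0$, and nonnegativity of the left-hand side gives $R_\a(f_i)-R_\a^* \to 0$.

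For part 2, I would argue directly from the inequality rather than invoking a general continuous mapping result, to keep the argument self-contained. Fix $\d > 0$; then, because $\psiLa$ is strictly increasing, $\{R_\a(\fnhat)-R_\a^* > \d\} \subseteq \{\psiLa^{-1}(R_L(\fnhat)-R_L^*) > \d\} = \{R_L(\fnhat)-R_L^* > \psiLa(\d)\}$, and $\psiLa(\d) > 0$. Convergence in probability of $R_L(\fnhat)-R_L^*$ to $0$ then forces the probability of the left event to vanish, proving convergence in probability of $R_\a(\fnhat)-R_\a^*$ to $0$. For the almost sure statement, on the probability-one event where $R_L(\fnhat)-R_L^* \to 0$, continuity of $\psiLa^{-1}$ at $0$ and the bound $R_\a(\fnhat)-R_\a^* \le \psiLa^{-1}(R_L(\fnhat)-R_L^*)$ yield the pointwise conclusion.

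There is no real obstacle here; the content is packaged in Theorem \ref{thm:nu} and the cited lemma. The only subtlety worth being careful about is making explicit that $\psiLa^{-1}$ is continuous and vanishes at $0$, so that the convergence passes through the inverse; this is a direct consequence of strict monotonicity and continuity of $\psiLa$ together with $\psiLa(0)=0$.
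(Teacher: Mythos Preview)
Your proposal is correct and follows essentially the same approach as the paper: both invert the bound of Theorem~\ref{thm:nu} using the invertibility of $\psiLa$, then exploit $\psiLa(0)=0$ and strict monotonicity to pass convergence through (the paper phrases part 1 via an $\eps$-$\delta$ statement and deduces the almost-sure case by applying part 1 pointwise, but these are stylistic rather than substantive differences).
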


\begin{proof}
Since $L$ is $\a$-CC, $\psiLa$ is invertible. For any $\eps \in (0,B_\a]$,
if
$R_L(f)-R_L^* < \psiLa(\eps)$, then $R_\a(f)-R_\a^* \le
\psiLa^{-1}(R_L(f)-R_L^*) < \eps$. Now 1 follows.

Assume $R_L(\fnhat)-R_L^* \to 0$ in probability. By the above reasoning,
if $R_\a(f)-R_\a^* \ge \eps$, then $R_L(f)-R_L^* \ge \psiLa(\eps)$.
Therefore, for any $\eps \in (0,B_\a]$,
$$
P(R_\a(\fnhat)-R_\a^* \ge \eps) \le P(R_L(\fnhat)-R_L^* \ge \psiLa(\eps))
\to 0
$$
as $n \to \infty$ by assumption.

Assume $R_L(\fnhat)-R_L^* \to 0$ with probability one. By part 1,
$$
P \left( \lim_{n \to \infty} R_\a(\fnhat)-R_\a^* = 0 \right) \ge
P \left( \lim_{n \to \infty} R_L(\fnhat)-R_L^* = 0 \right) = 1.
$$
Hence $R_\a(\fnhat)-R_\a^* \to 0$ with probability one.
\end{proof}

Below in Section \ref{sec:uml}, the above results are made more concrete
when we examine some specific losses (namely, uneven margin losses).

\subsection{Cost-insensitive classification}
\label{sec:costins}

We turn our attention to the cost-insensitive or 0/1 loss,
$$
U(y,t):=\ind{y=1}\ind{t\le0}+\ind{y=-1}\ind{t>0}
=2U_{1/2}(y,t).
$$
This loss is not only important in its own right,
but the associated quantity $H_L$, defined below, is
useful for calculating $\HLa$ when $\a \ne \frac12$,
as explained below.
The results in this section generalize those of \cite{bartlett06},
who focus on margin losses.  We place no restrictions on
the partial losses $\Lp$ and $\Lm$.

For an arbitrary loss L, define
$$
H_L(\eta):=C_L^-(\eta) - C_L^*(\eta)
$$
for $\eta \in[0,1]$, where
$$
C_L^-(\eta):=\inf_{t:t(2\eta-1)\le0} C_L(\eta,t).
$$
Also define for $\eps\in[0,1]$
\beas
\nu_L(\eps)&:=&\min_{\eta\in[0,1]:|2\eta-1|=
\eps}H_L(\eta) \\
&=&\min\{H_L(\mbox{$\frac{1+\eps}2$}),
H_L(\mbox{$\frac{1-\eps}2$}) \}.
\eeas
Finally, define $\psi_L(\eps)=\nu_L^{**}(\eps)$
for $\eps \in [0,1]$.

The following definition was introduced by \cite{bartlett06}
in the context of margin losses.

\begin{defn}
If $H_L(\eta)>0$ for all $\eta \in [0,1],
\eta \ne \frac12$, $L$ is said to be {\em classification
calibrated}, and we write $L$ is CC.
\end{defn}

For margin losses, this coincides with the definition of
\citeauthor{bartlett06}, and our $H_L$ equals their $\tilde{\psi}$. Also
note that $H_L(\eta)=H_{L,1/2}(\eta)$, and therefore $L$ is CC iff $L$ is
$\frac12$-CC. When $L=U$, we write $R(f), R^*, C(\eta,t)$, and $C^*(\eta)$
instead of
$R_U(f), R_U^*, C_U(\eta,t)$, and $C_U^*(\eta)$, respectively.

\begin{thm}
Let $L$ be a loss.
\bd
\item[1.] For any $f \in \sF$ and any
distribution $P$,
$$
\psi_L(R(f)-R^*) \le R_L(f)-R_L^*.
$$
\item[2.] $\psi_L$ is invertible if and only if $L$ is CC.
\ed
\end{thm}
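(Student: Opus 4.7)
My approach is a direct reduction to Theorem \ref{thm:nu} at $\alpha = 1/2$, using the relation $U = 2U_{1/2}$ already recorded in the excerpt. That scaling gives $R(f) - R^* = 2(R_{1/2}(f) - R_{1/2}^*)$ while $C_L$ and $C_L^*$ are entirely unaffected, which is why $H_L = H_{L,1/2}$. The constraint $|2\eta-1|=\eps$ defining $\nu_L$ is equivalent to $|\eta - 1/2| = \eps/2$, so I would first record the horizontal rescaling
$$\nu_L(\eps) = \nu_{L,1/2}(\eps/2), \qquad \eps \in [0,1].$$

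The only nontrivial step is that this rescaling survives the biconjugate, namely
$$\psi_L(\eps) = \psi_{L,1/2}(\eps/2).$$
I would prove this by a two-way minorant argument. The map $\eps \mapsto \psi_{L,1/2}(\eps/2)$ is convex and lower semi-continuous on $[0,1]$ and, being $\le \nu_{L,1/2}(\eps/2) = \nu_L(\eps)$, must be $\le \psi_L$. Conversely, $\eps' \mapsto \psi_L(2\eps')$ is convex lsc on $[0,1/2]$ and $\le \nu_L(2\eps') = \nu_{L,1/2}(\eps')$, so it is $\le \psi_{L,1/2}$. Both $\nu_L$ and $\nu_{L,1/2}$ vanish at zero by Lemma \ref{lemmaX} part 4, so no boundary pathologies arise.

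Given this identity, part 1 is just substitution:
$$\psi_L(R(f)-R^*) = \psi_{L,1/2}\!\bigl(R_{1/2}(f)-R_{1/2}^*\bigr) \le R_L(f) - R_L^*,$$
the inequality being Theorem \ref{thm:nu} part 1 at $\alpha = 1/2$. For part 2, the text already notes that $L$ is CC iff $L$ is $\tfrac12$-CC, and the rescaling $\psi_L(\eps) = \psi_{L,1/2}(\eps/2)$ trivially shows that $\psi_L$ is invertible on $[0,1]$ iff $\psi_{L,1/2}$ is invertible on $[0,1/2]$. Theorem \ref{thm:nu} part 2 then finishes the proof.

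The main obstacle is simply the careful justification of the biconjugate rescaling $\psi_L(\eps) = \psi_{L,1/2}(\eps/2)$. It is geometrically transparent, since rescaling the horizontal axis of the epigraph is an affine transformation that commutes with closed convex hull, but it needs to be done cleanly so that the domain change from $[0,1]$ to $[0,1/2]$ causes no issues at the endpoints. Everything else is routine bookkeeping in the cost-sensitive framework.
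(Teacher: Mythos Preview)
Your proposal is correct and follows essentially the same route as the paper's proof: reduce to Theorem \ref{thm:nu} at $\alpha=\tfrac12$ via the identities $H_L=H_{L,1/2}$, $\nu_L(\eps)=\nu_{L,1/2}(\eps/2)$, and $\psi_L(\eps)=\psi_{L,1/2}(\eps/2)$, then substitute for part 1 and chain equivalences for part 2. The only difference is that you supply an explicit minorant argument for the biconjugate rescaling, whereas the paper simply asserts it.
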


\begin{proof}
The proof follows from Theorem \ref{thm:nu} and the relationships $C(\eta,t)=
2C_{1/2}(\eta,t)$, $C^*(\eta)=2C_{1/2}^*(\eta)$,
$H_L(\eta)=H_{L,1/2}(\eta)$, $\nu_L(\eps)=
\nu_{L,1/2}(\mbox{$\frac{\eps}2$})$, and $\psi_L(\eps)=
\psi_{L,1/2}(\mbox{$\frac{\eps}2$})$. Thus,
to prove 1, note
\beas
\psi_L(R(f)-R^*) &=& \psi_{L,1/2}
(\half E_X[C(\eta(X),f(X))-C^*(\eta(X))]) \\
&=& \psi_{L,1/2}(E_X[C_{1/2}(\eta(X),f(X))
- C_{1/2}^*(\eta(X))] \\
&=& \psi_{L,1/2}(R_{1/2}(f)-R_{1/2}^*) \\
&\le& R_L(f)-R_L^*.
\eeas
To prove 2, note $\psi_L$ is invertible $\iff$
$\psi_{L,1/2}$ is invertible $\iff$ $L$ is
$\half$-CC $\iff$ $L$ is CC.
\end{proof}

When $L$ is a margin loss, $H_L$ is symmetric with respect to $\eta =
\frac12$, and the above result reduces to the surrogate regret bound
established by \citet{bartlett06}.

The following extends a result for margin losses noted by
\citet{steinwart07}.
For any loss $L$, we can express
$\HLa$ in terms of $H_L$.
This simplifies the determination of $\HLa, \nuLa$, and $\psiLa$.

Given the loss $L(y,t)=\ind{y=1}\Lp(t)+\ind{y=-1}
\Lm(t)$ and  $\a \in (0,1)$ define
$$
L_\a(y,t):=(1-\a)\ind{y=1}\Lp(t)+\a\ind{y=-1}
\Lm(t).
$$
Also introduce $w_\a(\eta)=(1-\a)\eta+\a(1-\eta)$
and
$$
\vartheta_\a(\eta)=\frac{(1-\a)\eta}{(1-\a)\eta+
\a(1-\eta)}.
$$

\begin{thm}
\label{thm:HLaa}
For any loss $L$ and any $\a \in (0,1)$,
\bd
\item[1.] For all $\eta \in [0,1]$,
\begin{equation}
\label{eqn:HLaa}
H_{L_\a,\a}(\eta)=w_\a(\eta)H_L(\vartheta_\a
(\eta)).
\end{equation}
\item[2.] $L$ is CC $\iff$ $L_\a$ is $\a$-CC.
\item[3.] $L$ is $\a$-CC $\iff$ $L_{1-\a}$ is CC.
\ed
\end{thm}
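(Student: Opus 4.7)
The plan is to reduce all three claims to a single bilinear factorization of the conditional risk of $L_\a$. From the definition, $C_{L_\a}(\eta,t) = (1-\a)\eta\Lp(t) + \a(1-\eta)\Lm(t)$, and pulling out $w_\a(\eta) = (1-\a)\eta + \a(1-\eta)$ gives
\[
C_{L_\a}(\eta,t) \;=\; w_\a(\eta)\,C_L\bigl(\vt,\,t\bigr),
\]
since $\vt$ is by construction the normalized weight on $\Lp(t)$. This identity is the workhorse behind the whole theorem.

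For part 1, take the infimum over $t$ of both sides: since $w_\a(\eta) > 0$, we get $C_{L_\a}^*(\eta) = w_\a(\eta)\,C_L^*(\vt)$. To handle $C_{L_\a,\a}^-$, I would verify the sign identity $\vt - \tfrac12 = (\eta-\a)/(2w_\a(\eta))$ by direct calculation, so that the constraint set $\{t:t(\eta-\a)\le 0\}$ coincides with $\{t:t(2\vt-1)\le 0\}$. Taking the restricted infimum then yields $C_{L_\a,\a}^-(\eta) = w_\a(\eta)\,C_L^-(\vt)$, and subtracting gives equation (\ref{eqn:HLaa}).

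For part 2, I would note that $\vartheta_\a$ is a continuous, strictly increasing bijection $[0,1]\to[0,1]$ with $\vartheta_\a(\a) = \tfrac12$ (checked from the formula), and $w_\a(\eta) > 0$ on $[0,1]$. From part 1, $H_{L_\a,\a}(\eta) > 0$ for all $\eta\ne\a$ is therefore equivalent to $H_L(\eta') > 0$ for all $\eta'\ne\tfrac12$, which is exactly the statement that $L_\a$ is $\a$-CC iff $L$ is CC.

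For part 3, the idea is to apply part 2 to the loss $L_{1-\a}$. A direct unpacking of the definition shows that $(L_{1-\a})_\a = \a(1-\a)\,L$, i.e.\ the two reweightings compose to a positive multiple of $L$, and scaling a loss by a positive constant leaves $H_{\cdot,\a}$ scaled by the same constant and hence preserves $\a$-CC-ness. So part 2 applied to $L_{1-\a}$ reads: $L_{1-\a}$ is CC iff $(L_{1-\a})_\a = \a(1-\a)L$ is $\a$-CC iff $L$ is $\a$-CC. I expect the only delicate step is the sign identity for $\vt - \tfrac12$ in part 1; once that is in place, parts 2 and 3 are almost immediate consequences of the factorization together with bookkeeping about positive scalar multiples and the bijection $\vartheta_\a$.
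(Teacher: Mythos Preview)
Your proposal is correct and follows essentially the same route as the paper: the factorization $C_{L_\a}(\eta,t)=w_\a(\eta)\,C_L(\vt,t)$, the sign identity $2\vt-1=(\eta-\a)/w_\a(\eta)$ to match the constraint sets, the bijection $\vartheta_\a$ with $\vartheta_\a(\a)=\tfrac12$ for part 2, and the observation $(L_{1-\a})_\a=\a(1-\a)L$ for part 3 are exactly the ingredients the paper uses.
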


\begin{proof}
Notice that $w_\a(\eta)>0$ for all $\eta \in [0,1]$,
and $2\vartheta_\a(\eta)-1=(\eta-\a)/w_\a(\eta)$.
Thus $\sign(2\vartheta_\a(\eta)-1)=\sign(\eta-\a)$.
In addition, $\vartheta_\a:[0,1]\to [0,1]$ is a
bijection. To prove 1, observe
\beas
C_{L_\a}(\eta,t)&=&(1-\a)\eta\Lp(t)+
\a(1-\eta)\Lm(t) \\
&=& w_\a(\eta)[\vartheta_\a(\eta)\Lp(t)+
(1-\vartheta_\a(\eta))\Lm(t)] \\
&=& w_\a(\eta)C_L(\vartheta_\a(\eta),t).
\eeas
Therefore $C_{L_\a}^*=w_\a(\eta) C_L^*
(\vartheta_\a(\eta))$ and
\beas
\CLam(\eta)&=& \inf_{\topp} C_{L_\a}(\eta,t) \\
&=& w_\a(\eta) \inf_{t:t(2\vartheta_\a(\eta)-1)\le0}
C_L(\vartheta_\a(\eta),t) \\
&=& w_\a(\eta) C_L^-(\vartheta_\a(\eta),t).
\eeas
Therefore
\beas
H_{L_\a,\a}(\eta) &=& \CLam(\eta)-C_{L_\a}^*(\eta) \\
&=& w_\a(\eta) [C_L^-(\vt)-
C_L^*(\vt)] \\
&=& w_\a(\eta) H_L(\vt).
\eeas
The second statement follows from 1, the positivity of
$w_\a$, and the fact that $\vartheta_\a$ is a
bijection with $\vartheta_\a(\a)=\half$.

To prove the third statement, notice $(L_{1-\a})_\a
=\a(1-\a)L$. Therefore, $L$ is $\a$-CC
$\iff$ $\a(1-\a)L$ is $\a$-CC $\iff$
$(L_{1-\a})_\a$ is $\a$-CC $\iff$ $L_{1-\a}$
is CC, where the last equivalence follows from 2.
\end{proof}

\subsection{Convex partial losses}
\label{sec:conv}

When the partial losses $\Lp$ and $\Lm$ are convex,
we can deduce some convenient characterizations of
$\a$-CC losses.

\begin{thm}
\label{thm:diff}
Let $L$ be a loss and $\a \in (0,1)$. Assume
$\Lp$ and $\Lm$ are convex
and differentiable at 0. Then $L$ is $\a$-CC
if and only if
\begin{equation}
\label{eqn:diff}
\Lp'(0)<0, \Lm'(0)>0, \ \mbox{and} \
\a \Lp'(0) + (1-\a)\Lm'(0)=0
\end{equation}
\end{thm}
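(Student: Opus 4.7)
The plan is to reduce the $\alpha$-CC condition, under convex partial losses, to a one-sided derivative condition on $t \mapsto C_L(\eta,t)$ at $t=0$, and then exploit that this derivative is affine in $\eta$.

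Since $\Lp$ and $\Lm$ are convex on $\reals$ and differentiable at $0$, so is $g_\eta(t) := C_L(\eta,t) = \eta\Lp(t) + (1-\eta)\Lm(t)$, with $g_\eta'(0) = h(\eta) := \eta\Lp'(0) + (1-\eta)\Lm'(0)$. Next, I would establish an elementary fact: for any convex $g:\reals\to\reals$ differentiable at $0$, $\inf_{t\le 0} g(t) > \inf_{t\in\reals} g(t)$ iff $g'(0) < 0$, and symmetrically $\inf_{t\ge 0} g(t) > \inf_{t\in\reals} g(t)$ iff $g'(0) > 0$. The argument uses monotonicity of the one-sided derivatives: if $g'(0)<0$ then $g$ is nonincreasing on $(-\infty,0]$, so $\inf_{t\le 0} g(t) = g(0)$, while moving slightly to the right strictly decreases $g$; the other cases are analogous.

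Applying this fact to $g_\eta$, and noting that $\{t : t(\eta-\alpha)\le 0\}$ equals $(-\infty,0]$ when $\eta>\alpha$ and $[0,\infty)$ when $\eta<\alpha$, one obtains $\HLa(\eta)>0$ iff $h(\eta)<0$ (when $\eta>\alpha$) or $h(\eta)>0$ (when $\eta<\alpha$). Thus $L$ is $\alpha$-CC iff the affine function $h$ is strictly positive on $[0,\alpha)$ and strictly negative on $(\alpha,1]$. Because $h$ is affine, this is equivalent (by continuity at $\alpha$ together with evaluation at the endpoints $\eta=0$ and $\eta=1$) to the three conditions $h(\alpha)=0$, $h(0)>0$, and $h(1)<0$, which unpack to exactly $\alpha\Lp'(0)+(1-\alpha)\Lm'(0)=0$, $\Lm'(0)>0$, and $\Lp'(0)<0$. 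Conversely, given these three conditions, $h$ is affine with $h(\alpha)=0$ and strictly negative slope $\Lp'(0)-\Lm'(0)$, hence has the required sign pattern.

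The only nontrivial step is the convex-analysis lemma in the second paragraph. Its one delicate point is that the global infimum of $g_\eta$ need not be attained (consider $\Lp(t)=e^{-t}$), so the argument must compare infima directly via monotonicity of $g$ on the appropriate half-line rather than by invoking a minimizer. I do not anticipate serious difficulty here.
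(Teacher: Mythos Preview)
Your proposal is correct and follows essentially the same route as the paper's proof. The paper also reduces $\alpha$-CC to the sign condition on $h(\eta)=\eta\Lp'(0)+(1-\eta)\Lm'(0)$ ``by convexity of $\Lp$ and $\Lm$'' and then argues the equivalence with (\ref{eqn:diff}) via affineness of $h$; you simply make explicit the convex-analysis lemma (comparing $\inf_{t\le 0}$ and $\inf_{t\ge 0}$ to the global infimum via the sign of $g_\eta'(0)$) that the paper invokes in one phrase.
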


A similar result appears in \citet{reid09tr}, and when the loss is a
composite proper loss the results are equivalent. Their result is
expressed in the context of class probability estimation, while our result
is tailored directly to classification. Although the proofs are
essentially the same, our setting allows us to state the result without
assuming the loss is differentiable everywhere. Thus, it encompasses
losses that are not suitable for class probability estimation, such as the
uneven hinge loss described below. We also make an observation in the
special case where $\a = \frac12$ and $L$ is a margin loss, also noted by
\citet{reid09tr}. Then $\Lp'(0)=\phi'(0)$ and $\Lm'(0)=-\phi'(0)$, and
(\ref{eqn:diff}) is equivalent to $\phi'(0)<0$, the condition identified
by \cite{bartlett06}.

\begin{proof}
Note that $\frac{\partial}{\partial t} C_L(\eta,0)
=\eta\Lp'(0) + (1-\eta)\Lm'(0)$. Now
$L$ is $\a$-CC if and only if $\CLam(\eta)>
C_L^*(\eta)$ for all $\eta \in [0,1], \eta \ne \a$,
and by convexity of $\Lp$ and $\Lm$, the latter condition
holds if and only if
\begin{equation}
\label{eqn:line}
\eta \Lp'(0) + (1-\eta)\Lm'(0) \left\{
\begin{array}{ll}
<0& \mbox{ if } \eta > \a \\
>0& \mbox{ if } \eta < \a
\end{array}
\right..
\end{equation}

Thus, we must show (\ref{eqn:diff}) $\iff$
(\ref{eqn:line}). Assume (\ref{eqn:line}) holds.
Since $\eta \mapsto \eta \Lp'(0) + (1-\eta)
\Lm'(0)$ is continuous, we must have $\a\Lp'(0)
+ (1-\a)\Lm'(0)=0$. $\Lp'(0)<0$ follows from
(\ref{eqn:line}) with $\eta=1$, and $\Lm'(0)>0$
follows from (\ref{eqn:line}) with $\eta=0$.

Now suppose (\ref{eqn:diff}) holds. Then $\eta
\mapsto \eta \Lp'(0) + (1-\eta)\Lm'(0)$ is
an affine function with negative slope that outputs 0
when $\eta=\a$. Thus (\ref{eqn:line}) holds.
\end{proof}
The following result facilitates calculation of regret bounds.

\begin{thm}
\label{thm:conv}
Assume $\Lp$ and $\Lm$ are convex.
\bd
\item[1.] If $L$ is $\a$-CC, then $\CLam(\eta)
=\eta \Lp(0) + (1-\eta)\Lm(0)$ and $\HLa$ is convex.
\item[2.] If $L$ is CC, then $C_L^-(\eta)=\eta
\Lp(0) + (1-\eta)\Lm(0)$, and $H_L$ is convex.
\ed
\end{thm}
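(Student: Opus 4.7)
My plan is to establish Part 1 first; Part 2 is then the special case $\a = \frac12$, using the identity $H_L = H_{L,1/2}$ and the equivalence of CC with $\frac12$-CC noted in Section \ref{sec:costins}. The key claim for Part 1 is
\[
\CLam(\eta) = \eta \Lp(0) + (1-\eta)\Lm(0) = C_L(\eta,0) \quad \text{for all } \eta \in [0,1].
\]
Once this identity is established, convexity of $\HLa$ follows immediately: $C_L^*(\eta) = \inf_t [\eta \Lp(t) + (1-\eta)\Lm(t)]$ is concave in $\eta$, being a pointwise infimum of functions affine in $\eta$, so $\HLa(\eta) = C_L(\eta, 0) - C_L^*(\eta)$ is affine minus concave, hence convex.

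To prove the identity for $\eta > \a$, the constraint $t(\eta-\a) \le 0$ becomes $t \le 0$, and $C_L(\eta, \cdot)$ is convex as a convex combination of the convex partial losses. $\a$-CC forces $\CLam(\eta) > C_L^*(\eta)$, which precludes $C_L(\eta, \cdot)$ from attaining its global infimum at any $t \le 0$, and also rules out $C_L(\eta, \cdot)$ being non-decreasing on $\reals$ (else $\inf_{t \le 0}$ would equal the unrestricted infimum). Since a convex function that attains no minimum is monotonic, either $C_L(\eta, \cdot)$ attains its global minimum at some $t^* > 0$, or it is non-increasing on all of $\reals$; in both cases it is non-increasing on $(-\infty, 0]$, so $\CLam(\eta) = C_L(\eta, 0)$. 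The case $\eta < \a$ is symmetric.

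The boundary case $\eta = \a$ I expect to be the main obstacle: there the constraint $t(\eta-\a) \le 0$ is vacuous and $\CLam(\a) = C_L^*(\a)$, so I must separately show $C_L^*(\a) = C_L(\a, 0)$, i.e., that $0$ is a global minimizer of $C_L(\a, \cdot)$. Let $\lambda_\pm$ and $\mu_\pm$ denote the one-sided derivatives of $\Lp$ and $\Lm$ at $0$, well-defined by convexity. For every $\eta > \a$, the previous step showed $C_L(\eta, \cdot)$ is non-increasing on $(-\infty, 0]$, which for a convex function is equivalent to $\eta \lambda_- + (1-\eta)\mu_- \le 0$; letting $\eta \to \a^+$ gives $\a \lambda_- + (1-\a)\mu_- \le 0$, and a symmetric limit from $\eta < \a$ yields $\a \lambda_+ + (1-\a)\mu_+ \ge 0$. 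Together these place $0$ in the subdifferential of $C_L(\a, \cdot)$ at $0$, so $0$ is a global minimizer and $C_L^*(\a) = C_L(\a, 0)$, completing the identity. Part 2 then follows by specializing Part 1 to $\a = \frac12$.
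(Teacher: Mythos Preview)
Your proof is correct and takes the same approach as the paper's: establish $\CLam(\eta) = C_L(\eta,0)$ and then conclude convexity of $\HLa$ as affine minus concave (invoking Lemma \ref{lemmaX}, part 2, for concavity of $C_L^*$). The paper's proof is a one-line sketch that simply asserts the formula ``follows from definitions and convexity''; your version fills in the details, including a careful subdifferential argument at the boundary $\eta = \a$ that the paper leaves implicit.
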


\begin{proof}
The formulas for $\CLam$
and $C_L^-$ follow from definitions and
convexity of $\Lp$ and $\Lm$. $\HLa(\eta)
= \CLam(\eta) - C_L^*(\eta)$ is convex
because $\CLam$ is affine and $C_L^*$ is
concave (Lemma \ref{lemmaX}, part 2).
Therefore $H_L = H_{L,1/2}$ is also
convex.
\end{proof}

\section{Calibration Functions}
\label{sec:cal}

In this section we present an alternative, though ultimately equivalent,
approach to surrogate regret bounds. Additional properties of $\a$-CC
losses are derived, and connections to \citep{steinwart07} are
established. We begin with an alternate definition of $\a$-classification
calibrated.

\begin{defn} We say {\em $L$ is $\a$-CC'} if, for all
$\eps > 0, \eta \in[0,1]$, there exists $\delta > 0$ such that
\beq \label{eqn:cal}
\CLd < \d \implies \Cad < \eps.
\eeq
We say {\em $L$ is uniformly $\a$-CC'} if, for all
$\eps > 0$, there exists $\delta > 0$ such that
\beq \label{eqn:unifcal}
\forall \eta \in [0,1], \CLd < \d \implies \Cad < \eps.
\eeq
\end{defn}

Recall $B_\a = \max(\a,1-\a)$. For $\eps \in [0,B_\a]$ also define
$$
\muLa(\eps): = \inf_{\eta \in [0,1]: |\eta - \a| \ge \eps}
\HLa(\eps) = \inf_{\eps \le \eps' \le B_\a} \nuLa(\eps').
$$

\begin{thm}
\label{thm:cal}
Let $\a \in (0,1)$. For any loss $L$,
\bd
\item[1.] For all $\eps > 0, \eta \in [0,1]$
$$
\CLd < \HLa(\eta) \implies \Cad < \eps.
$$
\item[2.] For all $\eps > 0, \eta \in [0,1]$,
$$
\CLd < \muLa(\eps) \implies \Cad < \eps.
$$
\ed
If $L$ is $\a$-CC, then
\bd
\item[3.] $L$ is $\a$-CC'
\item[4.] $L$ is uniformly $\a$-CC'.
\ed
\end{thm}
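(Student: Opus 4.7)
The four parts form a natural progression from pointwise to uniform calibration. My plan is to prove parts 1 and 2 by a direct contrapositive using Lemma \ref{lemmaX} part 1 (which characterizes the cost-sensitive conditional regret as $\ind{\sign(t)\ne\sign(\eta-\a)}|\eta-\a|$), and then derive parts 3 and 4 as consequences by choosing an appropriate $\d$ in each setting.

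For part 1, suppose for contradiction that $\Cad \ge \eps > 0$. By Lemma \ref{lemmaX} part 1 this forces $\sign(t)\ne\sign(\eta-\a)$, which implies $t(\eta-\a)\le 0$; in particular $t$ lies in the feasible set defining $\CLam(\eta)$. Hence
\[
C_L(\eta,t) \ge \inf_{t':\, t'(\eta-\a)\le 0} C_L(\eta,t') = \CLam(\eta),
\]
so $\CLd \ge \HLa(\eta)$, contradicting the hypothesis. The case $\eta=\a$ is vacuous since then $\Cad\equiv 0$. Part 2 is the same argument but keeping track of $\eps$: if $\Cad \ge \eps$, Lemma \ref{lemmaX} part 1 additionally forces $|\eta-\a|\ge\eps$, so $\HLa(\eta)\ge\muLa(\eps)$ by definition of $\muLa$, and the same chain gives $\CLd\ge\muLa(\eps)$.

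For part 3, fix $\eps>0$ and $\eta\in[0,1]$. If $\eta=\a$ any $\d>0$ works since $\Cad\equiv 0$. If $\eta\ne\a$, the assumption that $L$ is $\a$-CC gives $\HLa(\eta)>0$, so taking $\d = \HLa(\eta)$ and invoking part 1 yields the required implication.

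The hardest step is part 4, where $\d$ must be chosen independently of $\eta$. By part 2 it suffices to show $\muLa(\eps)>0$ for every $\eps\in(0,B_\a]$, and the main obstacle is that $\a$-CC only provides pointwise positivity of $\HLa$, so the infimum in $\muLa$ is a priori $0$. My plan is to bound $\muLa$ below by $\psiLa$: since $\psiLa\le\nuLa$ pointwise and $\psiLa$ is nondecreasing,
\[
\psiLa(\eps) \le \inf_{\eps'\ge\eps}\psiLa(\eps') \le \inf_{\eps'\ge\eps}\nuLa(\eps') = \muLa(\eps).
\]
Under $\a$-CC, the argument in the proof of Theorem \ref{thm:nu} part 2 shows $\psiLa(\eps)>0$ for all $\eps\in(0,B_\a]$. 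Hence $\d:=\muLa(\eps)\ge\psiLa(\eps)>0$ works uniformly in $\eta$, completing part 4.
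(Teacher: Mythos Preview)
Your proof is correct. Parts 1 and 2 follow essentially the same argument as the paper---you phrase them as direct contrapositives, while the paper splits into the cases $\eps > |\eta-\a|$ and $\eps \le |\eta-\a|$ and bounds $\HLa(\eta)$ by an infimum over $\{t:\Cad\ge\eps\}$; the logical content is identical. For part 3 you argue directly from part 1, whereas the paper simply notes that 3 follows from 4.

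The substantive difference is in part 4. The paper establishes $\muLa(\eps)>0$ by invoking Lemma \ref{lemmaX} part 3, which gives continuity of $\HLa$ on $\{\eta:|\eta-\a|\ge\eps\}$; then $\muLa(\eps)$ is the infimum of a continuous positive function over a compact set, hence positive. Your route instead observes the inequality $\psiLa(\eps)\le\muLa(\eps)$ (from $\psiLa\le\nuLa$ and $\psiLa$ nondecreasing) and imports the positivity of $\psiLa$ from Theorem \ref{thm:nu} part 2. This is a nice alternative: it avoids redoing a compactness/continuity argument and explicitly links the two surrogate-regret frameworks, at the price of depending on the earlier theorem rather than being self-contained. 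One small omission: you restrict attention to $\eps\in(0,B_\a]$ without saying why; you should note (as the paper does) that for $\eps>B_\a$ the implication (\ref{eqn:unifcal}) holds vacuously since $\Cad\le B_\a<\eps$ for all $t$.
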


\begin{proof}
To prove 1, let $\eps > 0, \eta \in [0,1]$. In
Lemma \ref{lemmaX}, part 1 it is shown that
$\Cad = \ind{\sgnne}|\eta-\a|$. Thus, if $\eps
>|\eta-\a|$, the result follows. Suppose $\eps \le
|\eta-\a|$. Then $\Cad \ge \eps \iff \sgnne$, and
\beas
\HLa(\eta) &=& \inf_{\topp} \CLd \\
&\le& \inf_{t:\sgnne} \CLd \\
&=& \inf_{t: \Cad \ge \eps} \CLd.
\eeas
Therefore, if $\CLd < \HLa(\eta)$, then $\Cad < \eps$.

To prove 2, let $\eps > 0, \eta \in [0,1]$. If
$\eps > |\eta-\a|$, then as in part 1 the
result follows immediately. If $\eps \le |\eta-\a|$,
then $\muLa(\eps) \le \HLa(\eta)$ and the result
follows from part 1.

Since uniformly $\a$-CC' implies
$\a$-CC', 3 follows from 4. To show 4,
let $\eps > 0$. By Lemma \ref{lemmaX}, part 3,
$\HLa$ is continuous on $\{\eta \in [0,1]:|\eta-\a|
\ge \eps \}$. Thus for $\eps \le B_\a$, $\muLa(\eps)$
is the infimum of a continuous, positive function on a compact
set and therefore positive. Taking $\d = \muLa(\eps)$,
the result follows by part 2. If $\eps > B_\a$,
the result holds because $\Cad = \ind{\sgnne}
|\eta-\a| \in [0,B_\a]$.
\end{proof}
\citet{steinwart07} employs $\a$-CC' as the definition of classification
calibrated in the case of cost-sensitive classification.  Although $\a$-CC
implies $\a$-CC', the reverse implication is not true as
the counterexample $L = U_\a$ demonstrates (perhaps
ironically). Under a mild
assumption on the partial losses, Steinwart's definitions and ours
agree. This is part 1 of the following result.  Under this same mild
assumption, we can also express what \citeauthor{steinwart07} calls the
calibration function and uniform calibration function of $L$. These are
the quantities $\d(\eps,\eta)$ and $\d(\eps)$ in parts 2 and 3,
respectively.
\begin{thm} Assume $\Lp$ and $\Lm$ are continuous at $0$. \bd
\item[1.] The following are equivalent:
\bd
\item[(a)] $L$ is $\a$-CC
\item[(b)] $L$ is $\a$-CC'
\item[(c)] $L$ is uniformly $\a$-CC'
\ed
\item[2.] For any $\eps > 0$ and
$\eta \in [0,1]$, the largest $\d$ such that
(\ref{eqn:cal}) holds is
\begin{equation}
\label{eqn:calfun}
\d(\eps,\eta): = \left\{
\begin{array}{ll}
\infty, & \eps > |\eta-\a|, \\
\HLa(\eta), & \eps\le|\eta-\a|.
\end{array} \right.
\eeq
\item[3.] For any $\eps > 0$, the largest $\d$
such that (\ref{eqn:unifcal}) holds is
\beq
\label{eqn:unifcalfun}
\d(\eps): = \left\{
\begin{array}{ll}
\infty, & \eps > B_\a, \\
\muLa(\eps), & \eps \le B_\a.
\end{array}
\right.
\eeq
\ed
\end{thm}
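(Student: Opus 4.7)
The plan is to leverage Theorem~\ref{thm:cal}, which already supplies (i) $\a$-CC $\implies$ $\a$-CC' and uniformly $\a$-CC', (ii) that $\d = \HLa(\eta)$ is a valid threshold in (\ref{eqn:cal}), and (iii) that $\d = \muLa(\eps)$ is a valid threshold in (\ref{eqn:unifcal}). What remains is the reverse implication $\a$-CC' $\implies$ $\a$-CC for part~1, together with an upgrade from validity to \emph{maximality} for parts~2 and~3.

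The engine behind all three closures will be a single perturbation claim: for every $\eta \in [0,1]$ with $|\eta-\a| \ge \eps > 0$ and every $\d > \HLa(\eta)$, there exists $t \in \reals$ with $\CLd < \d$ and $\Cad \ge \eps$. By definition of $\HLa$ I pick $t$ with $t(\eta-\a) \le 0$ and $\CLd < \d$; since $\Cad = \ind{\sgnne}|\eta-\a|$, it suffices to arrange $\sgnne$. When $\eta > \a$, the constraint $t \le 0$ already forces $\sign(t) = -1 \ne +1 = \sign(\eta-\a)$. The subtle case is $\eta < \a$ with the chosen $t$ equal to $0$, because then $\sign(0) = -1 = \sign(\eta-\a)$ under the paper's convention. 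This is exactly where continuity of $\Lp$ and $\Lm$ at $0$ enters: replace $t = 0$ by a sufficiently small $t' > 0$, so that $\sign(t') = +1$ while $\CLd$ remains $< \d$ by continuity. The $U_\a$ counterexample mentioned after Theorem~\ref{thm:cal} is precisely the obstruction that vanishes under the continuity hypothesis.

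With the perturbation claim in hand, the three parts unfold in parallel. For part~1, it suffices to show $\a$-CC' $\implies$ $\a$-CC; if $L$ is not $\a$-CC, pick $\eta_0 \ne \a$ with $\HLa(\eta_0) = 0$ and apply the claim with $\eps = |\eta_0 - \a|$ and arbitrary $\d > 0$ to contradict (\ref{eqn:cal}) at $\eta_0$. For part~2, the case $\eps > |\eta-\a|$ is trivial since $\Cad \le |\eta-\a| < \eps$ unconditionally; for $\eps \le |\eta-\a|$, validity of $\d = \HLa(\eta)$ is Theorem~\ref{thm:cal}.1, and any strictly larger $\d$ is ruled out by the claim. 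Part~3 is analogous: $\eps > B_\a$ is immediate from $\Cad \in [0,B_\a]$; for $\eps \le B_\a$, validity of $\d = \muLa(\eps)$ is Theorem~\ref{thm:cal}.2, and maximality follows by extracting from the infimum defining $\muLa(\eps)$ an $\eta$ with $|\eta-\a| \ge \eps$ and $\HLa(\eta) < \d$, then invoking the claim.

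The main obstacle is the $t = 0$ perturbation when $\eta < \a$: it is the single point at which continuity of the partial losses is indispensable, and it requires choosing $t' > 0$ small enough to simultaneously preserve $\CLd < \d$ and induce the sign mismatch. Everything else amounts to bookkeeping with the definitions of $\HLa$, $\nuLa$, and $\muLa$ against facts already established in Theorem~\ref{thm:cal} and Lemma~\ref{lemmaX}.
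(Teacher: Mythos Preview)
Your proposal is correct and follows essentially the same route as the paper. The paper packages the key step as the identity $\HLa(\eta) = \inf_{t:\Cad \ge \eps} \CLd$ (valid when $\eps \le |\eta-\a|$), obtained by observing that under continuity of $\Lp,\Lm$ at $0$ the infimum over $\{t:t(\eta-\a)\le 0\}$ equals the infimum over $\{t:\sgnne\}$; your ``perturbation claim'' is just the constructive reading of this identity, and the single subtle case (replacing $t=0$ by a nearby $t'>0$ when $\eta<\a$) is exactly where both arguments invoke the continuity hypothesis.
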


\begin{proof}
We have already shown (a) implies (b) and (c),
and (c) implies (b) is obvious, so let us show
(b) implies (a).

If $\eps > 0$ and $\eta \in [0,1]$ are such
that $\eps \le |\eta-\a|$, then $\eta \ne \a$,
and under the continuity assumption we have
$$
\inf_{\topp} C_L(\eta,t) = \inf_{t:\sgnne}
C_L(\eta,t).
$$
Therefore, from the proof of Theorem \ref{thm:cal}, part 1,
\beq
\label{eqn:calfun2}
\HLa(\eta) = \inf_{t:\Cad \ge \eps} \CLd.
\eeq
Now assume (b) holds, and let $\eta \in [0,1]$,
$\eta \ne \a$. Set $\eps = |\eta-\a|$.
Since $L$ is $\a$-CC', the right hand side of
(\ref{eqn:calfun2}) is positive. Therefore $\HLa(\eta)
> 0$ which establishes (a).

Now consider part 2. If $\eps > |\eta-\a|$,
then $\Cad = \ind{\sgnne}|\eta-\a| < \eps$
regardless of $\d$. If $\eps \le |\eta-\a|$,
then (\ref{eqn:calfun2}) holds which establishes
the result in this case.

To prove 3, first consider
$\eps > B_\a$. Then $\Cad \le B_\a < \eps$
regardless of $\d$. Now suppose $\eps \le B_\a$.
Then $\{\eta \in [0,1]:|\eta-\a| \ge \eps \}$
is nonempty, and this case now follows from
part 2 and the definition of $\muLa$.
\end{proof}

An emphasis of \citet{steinwart07} is the relationship between surrogate
regret bounds and uniform calibration functions. In our setting, Theorem
\ref{thm:cal} part 2 directly implies a surrogate regret bound in terms of
$\muLa$.

\begin{thm}
\label{thm:mubnd}
Let $L$ be a loss, $\a \in (0,1)$. Then
$$
\muLa^{**}(R_\a(f) - R_\a(f)) \le R_L(f) - R_L^*.
$$
\end{thm}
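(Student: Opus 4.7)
The plan is to apply Theorem \ref{thm:cal} part 2 pointwise in $\eta$, convert this into a bound on conditional regrets, and then integrate using Jensen's inequality with $\muLa^{**}$ — in direct parallel with the proof of Theorem \ref{thm:nu} part 1, but with $\muLa$ replacing $\nuLa$. (I read the stated inequality as $\muLa^{**}(R_\a(f) - R_\a^*) \le R_L(f) - R_L^*$, treating the second $R_\a(f)$ on the left as a typographical slip for $R_\a^*$.)

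First I would take the contrapositive of Theorem \ref{thm:cal} part 2: for every $\eps > 0$ and every $\eta \in [0,1]$, $C_\a(\eta,t) - C_\a^*(\eta) \ge \eps$ implies $C_L(\eta,t) - C_L^*(\eta) \ge \muLa(\eps)$. Specializing $\eps := C_\a(\eta,t) - C_\a^*(\eta)$ when this quantity is strictly positive yields the pointwise bound $\muLa(C_\a(\eta,t) - C_\a^*(\eta)) \le C_L(\eta,t) - C_L^*(\eta)$. The edge case $C_\a(\eta,t) = C_\a^*(\eta)$ is covered by noting $\muLa(0) = 0$, which follows from $0 \le \muLa(0) \le \nuLa(0) = 0$ by the definition of $\muLa$ together with Lemma \ref{lemmaX} part 4. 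Hence the pointwise bound holds unconditionally.

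Next, since $\muLa^{**} \le \muLa$, the pointwise bound persists with $\muLa^{**}$ on the left-hand side. Integrating over $X$ gives
$$
E_X\bigl[\muLa^{**}(C_\a(\eta(X),f(X)) - C_\a^*(\eta(X)))\bigr] \le E_X[C_L(\eta(X),f(X)) - C_L^*(\eta(X))] = R_L(f) - R_L^*.
$$
Because $\muLa^{**}$ is convex, Jensen's inequality gives
$$
\muLa^{**}(R_\a(f) - R_\a^*) = \muLa^{**}\bigl(E_X[C_\a(\eta(X),f(X)) - C_\a^*(\eta(X))]\bigr) \le E_X\bigl[\muLa^{**}(C_\a(\eta(X),f(X)) - C_\a^*(\eta(X)))\bigr],
$$
where the first equality uses $R_\a(f) - R_\a^* = E_X[C_\a - C_\a^*]$. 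Chaining these two inequalities gives the theorem.

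The whole argument is a short corollary of Theorem \ref{thm:cal} part 2; the only subtlety worth flagging is the $\eps = 0$ edge case, which is resolved by the one-line observation that $\muLa(0) = 0$. I do not anticipate any real obstacle.
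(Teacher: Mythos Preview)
Your proposal is correct and matches the paper's own proof essentially step for step: take the contrapositive of Theorem~\ref{thm:cal} part 2, specialize $\eps$ to the conditional $\a$-regret to get the pointwise bound $\muLa(C_\a - C_\a^*) \le C_L - C_L^*$, then pass to $\muLa^{**}$ and apply Jensen. Your explicit handling of the $\eps = 0$ case via $\muLa(0)=0$ is a small refinement over the paper, which simply sets $\eps = C_\a(\eta(x),f(x)) - C_\a^*(\eta(x))$ without separately discussing when this vanishes.
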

This result is similar to Theorem 2.13 of
\citet{steinwart07} and surrounding discussion.
While that
result holds in a very general setting that spans
many learning problems,
Theorem \ref{thm:mubnd} specializes the underlying
principle to cost-sensitive classification.
\begin{proof}
By Theorem \ref{thm:cal}, part 2, we know that
$\CLd < \muLa(\eps) \implies \Cad < \eps$.
Given $f \in \sF$ and $x \in \sX$, let
$\eps = C_\a(\eta(x),f(x)) - C_\a^*(\eta(x))$.
Then $C_L(\eta(x),f(x)) - C_L^*(\eta(x)) \ge
\muLa(\eps)$, or in other words
$$
\muLa(C_\a(\eta(x),f(x)) - C_\a^*(\eta(x))) \le
C_L(\eta(x),f(x)) - C_L^*(\eta(x)).
$$
By Jensen's inequality,
\beas
\muLa^{**}(R_\a(f) - R_\a^*) &\le&
E_X[\muLa^{**}(C_\a(\eta(X),f(X) - C_\a^*(\eta(X)))]
\\
&\le& E_X[\muLa(C_\a(\eta(X),f(X))
- C_\a^*(\eta(X)))] \\
&\le& E_X[C_L(\eta(X),f(X)) - C_L^*(\eta(X))]
\\
&=& R_L(f) - R_L^*.
\eeas
\end{proof}
Thus, for any loss we have two surrogate regret bounds.
In fact, the two bounds are the same.
\begin{thm}
Let $\a \in (0,1)$.
\bd
\item[1.] For any loss $L$, $\muLa^{**} =
\nuLa^{**}$.
\item[2.] If $\Lp$ and $\Lm$ are convex, then
$\muLa = \nuLa$.
\ed
\end{thm}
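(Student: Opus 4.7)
The plan for part 1 is to exploit two facts. First, $\muLa\le\nuLa$ (take $\eps'=\eps$ in the definition of $\muLa$), so by monotonicity of the biconjugate $\muLa^{**}\le\nuLa^{**}$. Second, $\nuLa^{**}=\psiLa$ is nondecreasing on $[0,B_\a]$ (as noted preceding Theorem \ref{thm:nu}: it is convex, nonnegative, and vanishes at $0$). The key step is an elementary observation: any nondecreasing $g\le\nuLa$ satisfies $g\le\muLa$, since for $\eps'\ge\eps$, $g(\eps)\le g(\eps')\le\nuLa(\eps')$, whence $g(\eps)\le\muLa(\eps)$ by taking the infimum over $\eps'$. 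Applied with $g=\nuLa^{**}$ this yields $\nuLa^{**}\le\muLa$, and since $\nuLa^{**}$ is convex and lower semicontinuous it must further satisfy $\nuLa^{**}\le\muLa^{**}$.

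For part 2, I reduce to showing $\nuLa$ is itself nondecreasing: then $\muLa(\eps)=\inf_{\eps'\ge\eps}\nuLa(\eps')=\nuLa(\eps)$. From (\ref{eqn:nu1})--(\ref{eqn:nu2}), monotonicity of $\nuLa$ follows as soon as $\HLa$ is nondecreasing on $[\a,1]$ and nonincreasing on $[0,\a]$, with the caveat that in some degenerate configurations one instead shows $\nuLa\equiv 0$ directly.

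When $L$ is $\a$-CC, the claim follows from Theorem \ref{thm:conv}: $\HLa$ is convex on $[0,1]$, and combined with $\HLa\ge 0$ and $\HLa(\a)=0$, the point $\a$ is a global minimum, so convexity produces the required one-sided monotonicity. When $L$ is not $\a$-CC I analyze $\CLam$ directly via convexity of $C_L(\eta,\cdot)$ in $t$. For $\eta>\a$, $\CLam(\eta)=\inf_{t\le 0}C_L(\eta,t)$ equals $C_L^*(\eta)$ if some unconstrained minimizer of $C_L(\eta,\cdot)$ lies in $(-\infty,0]$ and equals $C_L(\eta,0)$ otherwise (since by convexity $C_L(\eta,\cdot)$ is then nonincreasing on $(-\infty,0]$). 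So $\HLa(\eta)\in\{0,g(\eta)\}$ where $g(\eta):=C_L(\eta,0)-C_L^*(\eta)$ is affine minus concave, hence convex and nonnegative. A sign analysis of the right derivative $\eta\Lp'(0_+)+(1-\eta)\Lm'(0_+)$ identifies the region where $\HLa=g$ as an interval whose endpoint $\eta^*$ satisfies $g(\eta^*)=0$; convexity of $g$ then makes $\HLa$ nondecreasing on $[\a,1]$ in the standard orientation $\Lp'(0_+)<0<\Lm'(0_+)$, with a symmetric argument handling $[0,\a]$. In the remaining sign configurations $\HLa$ vanishes identically on one side of $\a$, so the min in $\nuLa$ always picks the zero side and $\nuLa\equiv 0$ trivially.

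The principal obstacle is this non-$\a$-CC branch of part 2: Theorem \ref{thm:conv}'s global convexity of $\HLa$ is unavailable, $\HLa$ may even be discontinuous at $\eta=\a$, and one must reason piecewise about the constrained infimum. Isolating the convex auxiliary function $g$, tracking the transition threshold $\eta^*$, and observing that the degenerate orientations force $\nuLa\equiv 0$ are what let the monotonicity argument survive the loss of global convexity.
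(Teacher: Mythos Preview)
For part 1 your argument is correct and genuinely different from the paper's. The paper invokes Lemma~\ref{lemmaZ}, which shows $\co\Epi\nuLa=\co\Epi\muLa$ directly by exhibiting, for each $(\eps,\muLa(\eps))$, an explicit convex combination of points of $\Epi\nuLa$. You instead exploit that $\psiLa=\nuLa^{**}$ is nondecreasing (already noted in the paper), so it lies below the largest nondecreasing minorant $\muLa$ of $\nuLa$, and then convexity and lower semicontinuity force $\nuLa^{**}\le\muLa^{**}$. Both routes are clean; yours is more conceptual once monotonicity of $\psiLa$ is available, while the paper's lemma is self-contained and reusable.

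For part 2 there is a genuine gap in your non-$\a$-CC branch, and it cannot be repaired: the statement is actually false without the $\a$-CC hypothesis. Your claim that ``the min in $\nuLa$ always picks the zero side and $\nuLa\equiv 0$'' overlooks the piecewise form (\ref{eqn:nu1})--(\ref{eqn:nu2}): $\nuLa(\eps)$ is a minimum of two $\HLa$ values only for $\eps\le\min(\a,1-\a)$; for larger $\eps$ only the long side contributes. If $\HLa$ vanishes on the \emph{short} side but not the long side, $\nuLa$ can be positive on part of $(\min(\a,1-\a),B_\a]$ and then return to zero, destroying monotonicity. Concretely, take $\Lp(t)=(1+t)^2$, $\Lm(t)=(1-t)^2$, $\a=\tfrac15$. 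Then $C_L^*(\eta)=4\eta(1-\eta)$, the unconstrained minimizer is $t^*=1-2\eta$, and one checks $\HLa(\eta)=(1-2\eta)^2$ for $\eta\in(\tfrac15,\tfrac12)$ and $\HLa(\eta)=0$ elsewhere. Hence $\nuLa(\tfrac14)=\HLa(0.45)=0.01$, while $\nuLa(\eps)=0$ for all $\eps\ge\tfrac{3}{10}$, so $\muLa(\tfrac14)=0\ne\nuLa(\tfrac14)$. Your $\a$-CC branch is correct and coincides with the paper's argument; note, however, that the paper's own proof has the same lacuna, since it cites Theorem~\ref{thm:conv} for convexity of $\HLa$, and that theorem carries the $\a$-CC assumption.
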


\begin{proof}
Part 1 follows from Lemma \ref{lemmaZ}.
To see the second statement, recall that
$\HLa$ is nonnegative, $\HLa(\a) = 0$
(Lemma \ref{lemmaX},
part 4), and $\HLa$ is convex (Theorem
\ref{thm:conv}). Thus $\HLa(\eta)$ is
nondecreasing as $|\eta-\a|$ grows, and the
result follows.
\end{proof}

Thus $\nuLa$ and $\muLa$ give two approaches
to the same bound. $\nuLa$ is perhaps simpler
to conceptualize, and $\muLa$ is
connected to the notion of uniform calibration.

\section{Uneven Margin Losses}
\label{sec:uml}

We now apply the preceding theory to a special class of
asymmetric losses.
\begin{defn}
\label{def:uml}
Let $\phi: \reals \to [0, \infty)$ and $\beta,
\g > 0$. We refer to the losses
$$
L(y,t) = \ind{y=1}\phi(t) +  \ind{y=-1}
\beta\phi(-\g t)
$$
and
$$
L_\a(y,t) = (1-\a)\ind{y=1}\phi(t) + \a
\ind{y=-1}\beta\phi(-\g t)
$$
as {\em uneven margin losses}.
\end{defn}
When $\beta = \g = 1$, $L$ in Definition \ref{def:uml}
is a conventional margin loss, and $L_\a$ can be called an
$\a$-weighted margin loss.
Since they differ from margin losses
by a couple of scalar parameters, empirical risks based on uneven
margin losses can typically be
optimized by slightly modified versions of margin-based
algorithms.

Before proceeding, we offer a couple of comments on
Definition \ref{def:uml}. First, although $\beta$ may
appear redundant in $L_\a$, it is not.
$\a$ is fixed at a desired cost parameter, and
thus is not tunable. Second, there would be no
added benefit from a loss of the form
$\ind{y=1}\phi(\g' t) + \ind{y=-1}\beta \phi(-\g t)$. We may assume $\g' =
1$ without loss of generality since scaling a decision function
$f$ by a positive constant does not alter the
induced classifier. However, alternate parametrizations such as
$\ind{y=1}\phi((1-\rho) t) + \ind{y=-1}\beta \phi(-\rho t)$, $\rho \in
(0,1)$, might be desirable in some situations.

A common motivation for uneven margin losses is
classification with an unbalanced training data set.
In unbalanced data, one class has (often substantially)
more representation than the other, and margin losses have been observed
to perform poorly in such situations. Weighted
margin losses, which have the form
$\a'\ind{y=1} \phi(t) + (1-\a')\ind{y=-1}
\phi(-t)$,
are often used as a heuristic for unbalanced data.
However, as \citet{steinwart07} notes,
there is no reason why the $\a'$ that yields good performance on
unbalanced data will be the desired cost parameter $\a$.
In other words, this heuristic typically results in
losses that are not $\a$-CC.

The parameter $\g$ offers another means to accommodate unbalanced data.
Such losses have previously been explored in the context of specific
algorithms, including the perceptron \citep{herbrich02}, boosting
\citep{vasconcelos07}, and support vector machines \citep{yang09,
shawetaylor03}. Uneven margins ($\g \ne 1$) have been found to yield
improved empirical performance in classification problems involving
label-dependent costs and/or unbalanced data.

Prior work involving uneven margin losses
has not addressed the issue of whether these losses
are CC or $\a$-CC. The following result
clarifies the issue for convex $\phi$.

\begin{cor} \label{cor:cvxuml}
Let $\phi$ be convex and differentiable at 0, $\beta, \g
> 0$ and let $L$, $L_\a$ be the associated uneven
margin losses as in Definition \ref{def:uml}.
The following are equivalent:
\bd
\item[(a)] $L$ is CC
\item[(b)] $L_\a$ is $\a$-CC
\item[(c)] $\beta = \frac1{\g}$ and $\phi'(0) < 0$.
\ed
\end{cor}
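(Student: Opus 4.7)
The plan is to chain together two previously established results: Theorem \ref{thm:HLaa}, part 2, which gives the equivalence (a) $\iff$ (b) for free once we observe that $L$ and $L_\a$ here stand in exactly the relationship defined just before that theorem, and Theorem \ref{thm:diff}, which characterizes $\a$-CC in the convex differentiable case. So the real work is to translate Theorem \ref{thm:diff} into condition (c).

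First I would identify the partial losses of the uneven margin loss $L$ as $\Lp(t) = \phi(t)$ and $\Lm(t) = \beta\phi(-\g t)$. Convexity of $\phi$ together with $\beta, \g > 0$ gives convexity of both $\Lp$ and $\Lm$, and differentiability of $\phi$ at $0$ together with the chain rule gives differentiability of $\Lm$ at $0$, so Theorem \ref{thm:diff} applies (with $\a = \tfrac12$, since CC means $\tfrac12$-CC). Computing derivatives at $0$ yields $\Lp'(0) = \phi'(0)$ and $\Lm'(0) = -\beta\g\,\phi'(0)$.

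Then I would plug into the three conditions of (\ref{eqn:diff}) with $\a = \tfrac12$. The sign conditions $\Lp'(0) < 0$ and $\Lm'(0) > 0$ both reduce (using $\beta, \g > 0$) to the single inequality $\phi'(0) < 0$. The balance condition $\tfrac12 \Lp'(0) + \tfrac12 \Lm'(0) = 0$ becomes $\phi'(0)(1 - \beta\g) = 0$, which combined with $\phi'(0) < 0$ forces $\beta\g = 1$, i.e., $\beta = 1/\g$. Conversely, assuming (c) gives $\phi'(0) < 0$, $\Lm'(0) = -\beta\g\phi'(0) = -\phi'(0) > 0$, and $\tfrac12(\phi'(0) - \phi'(0)) = 0$, verifying (\ref{eqn:diff}). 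This establishes (a) $\iff$ (c).

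Finally, (b) $\iff$ (a) is immediate from Theorem \ref{thm:HLaa}, part 2, once one checks that the $L_\a$ in the definition of uneven margin losses matches the construction in that theorem (it does: multiplying $\Lp$ by $(1-\a)$ and $\Lm$ by $\a$). There is no serious obstacle here; the only minor care needed is checking that convexity and differentiability at $0$ transfer correctly from $\phi$ to both partial losses, which is routine.
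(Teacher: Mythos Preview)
Your proposal is correct and follows essentially the same approach as the paper: use Theorem~\ref{thm:HLaa} for (a) $\iff$ (b) and Theorem~\ref{thm:diff} for the remaining equivalence. The only cosmetic difference is that the paper applies Theorem~\ref{thm:diff} to $L_\a$ (with parameter $\a$) to obtain (b) $\iff$ (c), whereas you apply it to $L$ (with parameter $\tfrac12$) to obtain (a) $\iff$ (c); the computations are the same up to harmless factors of $\a$ and $1-\a$.
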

\begin{proof}
The equivalence of (a) and (b) follows from Theorem
\ref{thm:HLaa}, and the equivalence of (b) and (c) follows
from Theorem \ref{thm:diff}.
\end{proof}
This result implies that for any $\a \in (0,1)$
and $\g > 0$,
$$
L_\a(y,t) = (1-\a) \ind{y=1} \phi(t) +
\frac{\a}{\g} \ind{y=-1} \phi(-\g t)
$$
is $\a$-CC provided $\phi$ is convex and $\phi'(0)<0$.
Thus, $\g$ is a parameter that
can be tuned as needed, such as for unbalanced data,
while the loss remains $\a$-CC. Figure \ref{fig:cvxpart}
displays the partial losses for
three common $\phi$ and for three values of $\g$.
If $\phi$ is not convex, then uneven margin
losses can still be $\a$-CC, but the necessary
relationship between $\beta$ and $\g$ may be
different from that given by Corollary \ref{cor:cvxuml}.
An example is given below where $\phi$ is a sigmoid.

\begin{figure}
\centering
\includegraphics[width = 1.0\textwidth]{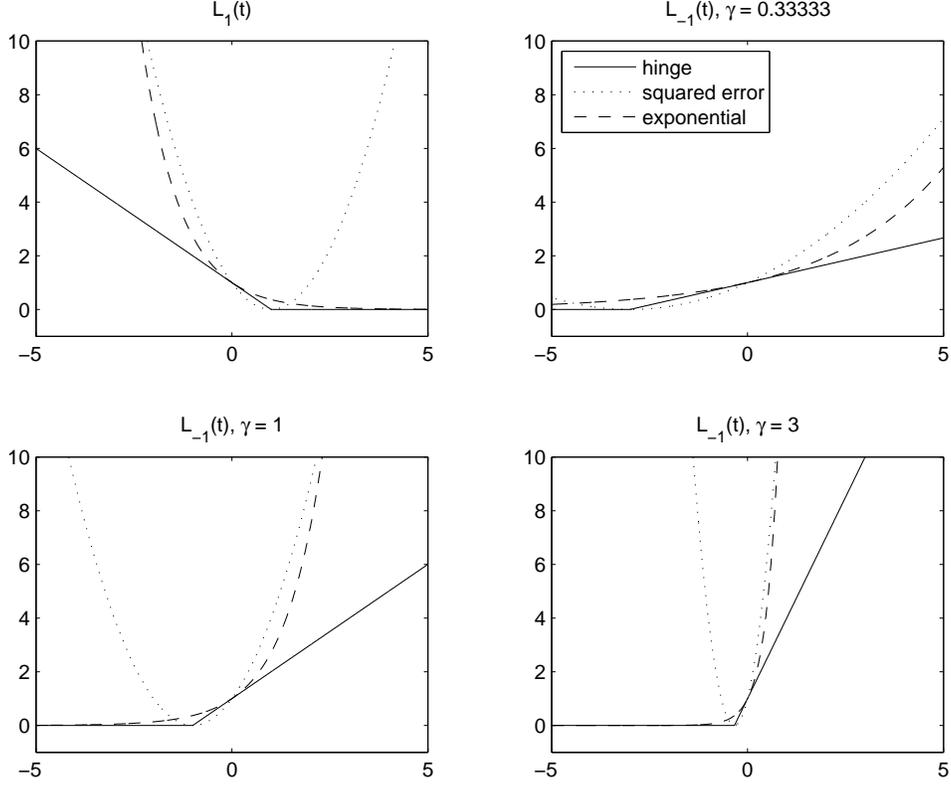}
\caption{\label{fig:cvxpart}Partial losses of an
uneven margin loss, for three common $\phi$
(hinge, squared error, and exponential) and three values
of $\g$.}
\end{figure}

To illustrate the general theory developed in Sec. 2,
four examples of uneven margin losses, corresponding to
different $\phi$, are now considered in detail.
The first three are convex, while the fourth is not.
In each case, the
primary effort goes in to computing $H_L(\eta) =
C_L^-(\eta) - C_L^*(\eta)$. Given $H_L$,
$H_{L_\a,\a}$ is determined by Eqn.
(\ref{eqn:HLaa}), and $\nu_{L_\a,\a}$ by
Eqns. (\ref{eqn:nu1}) and (\ref{eqn:nu2}).
For the convex $\phi$, all of which satisfy
$\phi(0) = 1$, $C_L^-(\eta) = \eta + \frac1{\g}(1-\eta)$
by Theorem \ref{thm:conv}, part 2.

\subsection{Uneven hinge loss}
Let $\phi(t) = (1-t)_+$, where $(s)_+ =
\max(0,s)$.  Then
$$
L(y,t) = \ind{y=1}(1 - t)_+ + \ind{y=-1} \frac1{\g} (1 + \g t)_+
$$
and
\beas
C_L(\eta, t) &=& \eta (1 - t)_+ +
\frac{1-\eta}{\g} (1 + \g t)_+ \\
&=& \left \{
\begin{array}{ll}
\eta(1-t), & t \le -\mbox{$\frac1{\g}$} \\
\eta (1-t) + \sfrac{1-\eta}{\g}(1 + \g t), &
\sfrac{-1}{\g} < t < 1 \\
\sfrac{1-\eta}{\g} (1+ \g t), & t \ge 1.
\end{array}
\right.
\eeas
Since $C_L$ is piecewise linear and continuous,
we know
$C_L^*(\eta)$ is the value of $C_L(\eta,t)$
when $t$ is one of the two knot locations.
Thus
\beas
C_L^*(\eta) &=& \min(\eta(1+\sfrac{1}{\g}),
\sfrac{1-\eta}{\g}(1+\g)) \\
&=& \sfrac{1+\g}{\g} \min(\eta,1-\eta)
\eeas
and
\beas
H_L(\eta) &=& \eta + \sfrac{1}{\g}(1-\eta)
- \sfrac{1+ \g}{\g}  \min (\eta, 1-\eta) \\
&=& \left\{
\begin{array}{ll}
2\eta -1, & \eta \ge \half \\
\sfrac{1-2\eta}{\g}, & \eta<\half.
\end{array}
\right.
\eeas

Now $\HLaa(\eta)$ is
given by Eqn. (\ref{eqn:HLaa}), and $\nuLaa$
is given by Eqns. (\ref{eqn:nu1}) and
(\ref{eqn:nu2}).  For the hinge case these
expressions simplify considerably:
$$
\HLaa(\eta) = \bcase
\eta - \a, & \eta \ge \a \\
\sfrac{\a-\eta}{\g}, & \eta < \a.
\ecase
$$
Expressions for $\nuLaa$ are given below.
Figure \ref{fig:hinge} shows $\HLaa$ and
$\nuLaa$ for three values of $\a$ and four
values of $\g$.

\begin{figure}
\centering
\includegraphics[width = 1.0\textwidth]{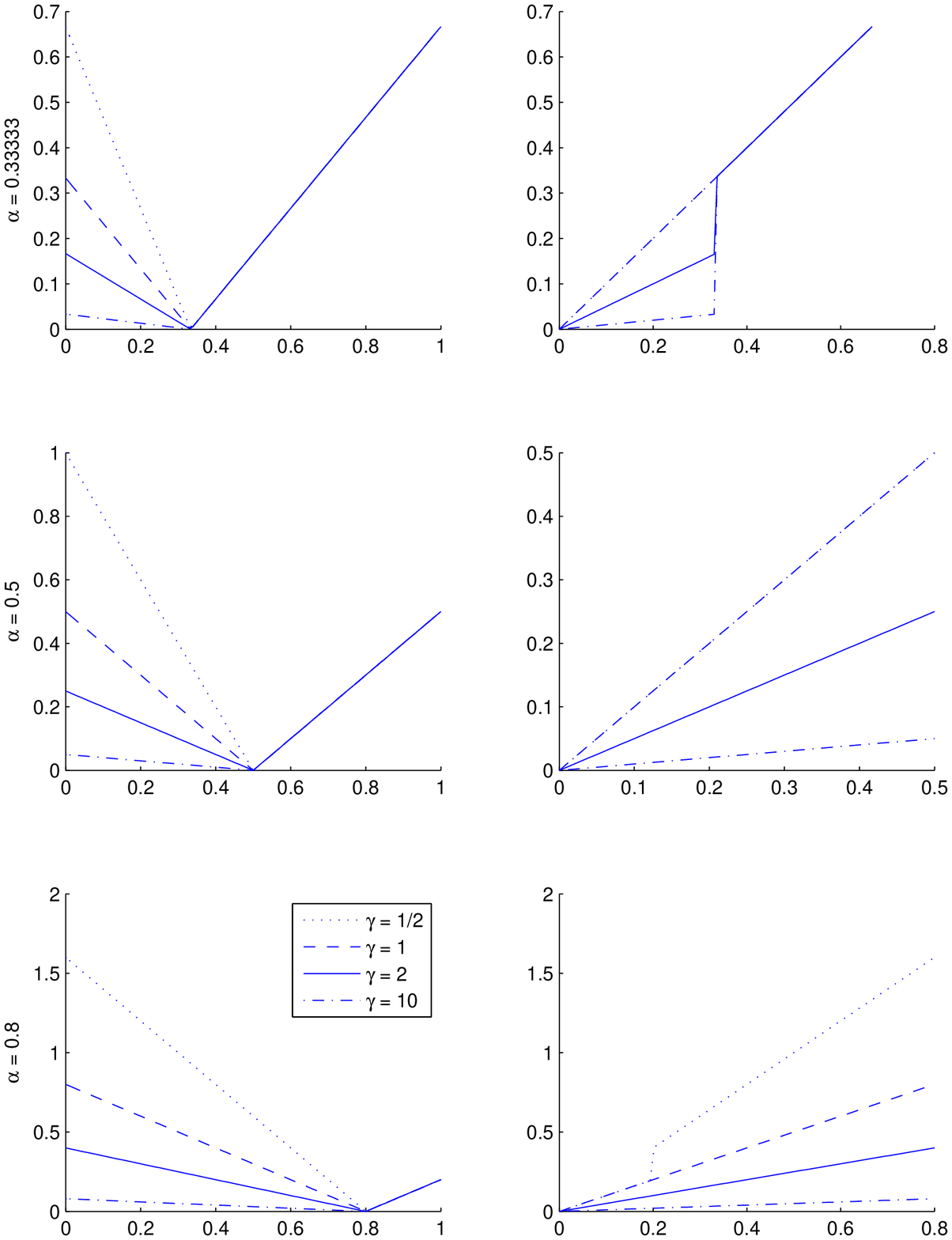}
\caption{{\bf Uneven hinge loss}.
$\HLaa$ (left column)
and $\nuLaa$ (right column) for three values of $\a$ and
four values of $\g$.
\label{fig:hinge}}
\end{figure}

These plots illustrate how $\nuLaa$ is
sometimes discontinuous at $\min(\a,1-\a)$.
We can characterize when $\nuLaa$ has a
discontinuity as follows.  From Eqn. (\ref{eqn:nu1}),
for $\a < \frac12$,
$$
\nuLaa(\eps) = \bcase
\min(\eps, \sfrac{\eps}{\g}), & 0 \le \eps \le \a \\
\eps, & \a < \eps \le 1-\a.
\ecase
$$
This is discontinuous at $\a$ iff $\g>1$
By Eqn. (\ref{eqn:nu2}), for $\a > \frac12$,
$$
\nuLaa (\eps) = \bcase
\min(\eps, \sfrac{\eps}{\g}), & 0 \le \eps \le 1-\a \\
\sfrac{\eps}{\g}, & 1-\a < \eps \le \a.
\ecase
$$
This is discontinuous at $1-\a$ iff $\g<1$.
If $\a=\frac12$, $\nuLaa$ is never
discontinuous.  In summary, $\nuLaa$ is discontinuous
at $\min(\a,1-\a)$ iff $(\a-\frac12)
(\g-1) < 0$.

\subsection{Uneven squared error loss}
Now let $\phi(t) = (1-t)^2$.  Then
$$
L(y,t) = \ind{y=1}(1-t)^2 +
\ind{y=-1} \frac1{\g} (1+\g t)^2
$$
and
$$
C_L(\eta,t) = \eta (1-t)^2 + \frac{1-\eta}{\g}
(1+\g t)^2.
$$
The minimizer of $C_L(\eta,t)$ is
$$
t^* = \frac{2 \eta-1}{\eta + \g(1-\eta)}.
$$
This yields (after some algebra)
$$
C_L^*(\eta) = C_L(\eta,t^*) = \frac{(1+\g)^2}{\g}
\cdot \frac{\eta (1-\eta)}{\eta + \g (1-\eta)},
$$
and therefore
$$
H_L(\eta) = \eta + \frac1{\g} (1 - \eta)
- \frac{(1+\g)^2}{\g} \cdot
\frac{\eta(1-\eta)}{\eta + \g(1-\eta)}.
$$
Figure \ref{fig:sqrerr} show plots of $\HLaa$ and
$\nuLaa$ for various values of $\a$ and $\g$.
We see again evidence that $\nuLaa$ can be
discontinuous at $\min(\a,1-\a)$.

As in the other example, we have not indicated
$\psi_{L_\a,\a}$.  Yet it can easily
be visualized as the largest
convex minorant of $\nuLaa$.
In many cases, $\nuLaa$ is actually convex
and hence equals $\psi_{L_\a,\a}$.  The
same comment applies to the hinge
and exponential examples.

\begin{figure}
\centering
\includegraphics[width = 1.0\textwidth]{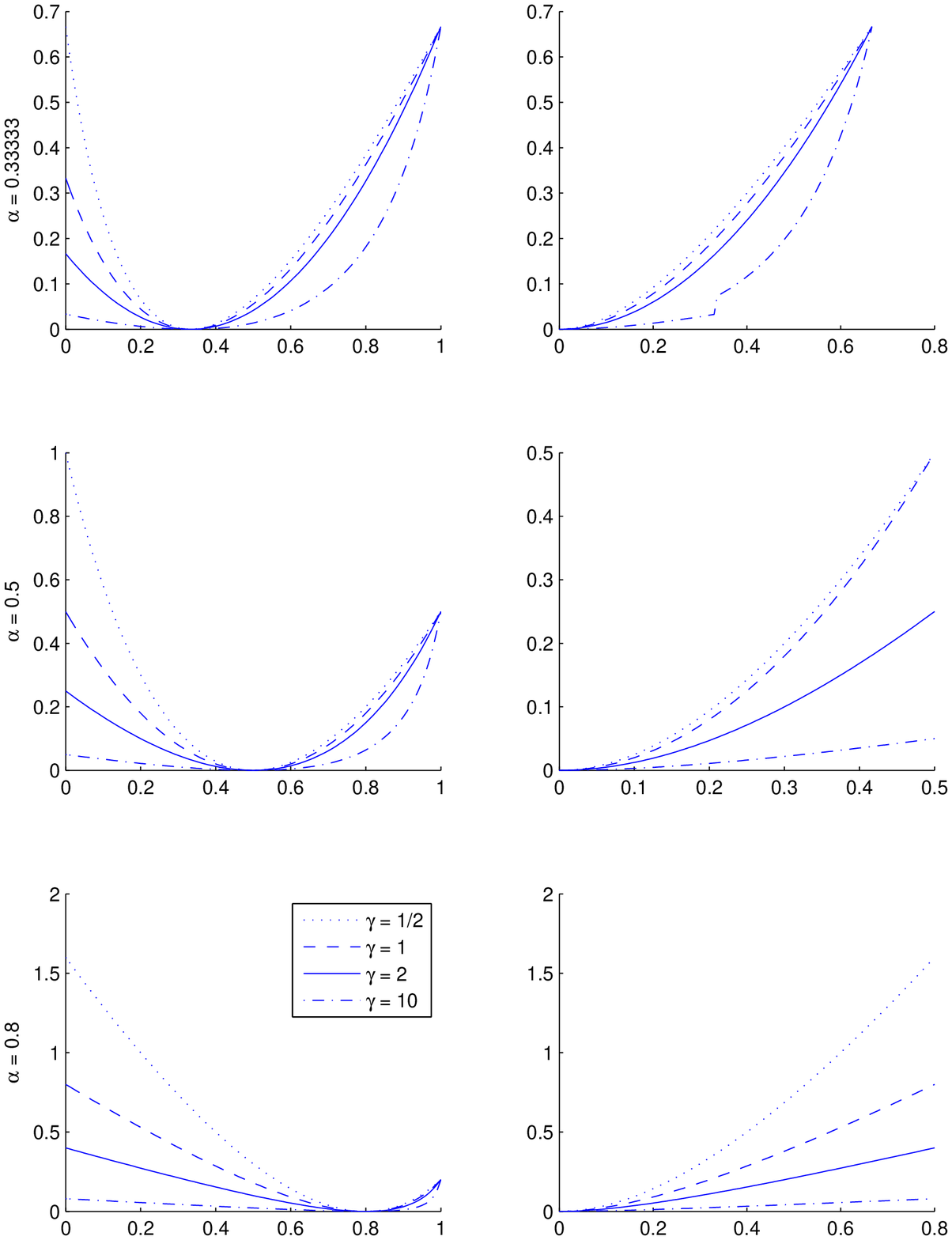}
\caption{{\bf Uneven squared error loss}.
$\HLaa$ (left column)
and $\nuLaa$ (right column) for three values of $\a$ and
four values of $\g$.
\label{fig:sqrerr}}
\end{figure}

\subsection{Uneven exponential loss}
\label{sec:exp}

Now let $\phi(t) = e^{-t}$ and consider
$$
L(y,t) = \ind{y=1} e^{-t} + \ind{y=-1} \frac1{\g} e^{\g t}.
$$
Then
$$
C_L(\eta,t) = \eta e^{-t} + \frac{1-\eta}{\g}
e^{\g t}
$$
is minimized by
$$
t^* = \frac1{1+\g} \ln \left(
\frac{\eta}{1-\eta} \right),
$$
yielding
$$
C_L^*(\eta) = C_L(\eta,t^*) = \eta \left(
\frac{1-\eta}{\eta} \right)^{\sfrac{1}{1+\g}}
+ (1-\eta) \left( \frac{\eta}{1-\eta}
\right)^{\sfrac{\g}{1+\g}}.
$$
Figure \ref{fig:exp} shows plots of $\HLaa$
and $\nuLaa$ for various $\a$ and $\g$.

\begin{figure}
\centering
\includegraphics[width = 1.0\textwidth]{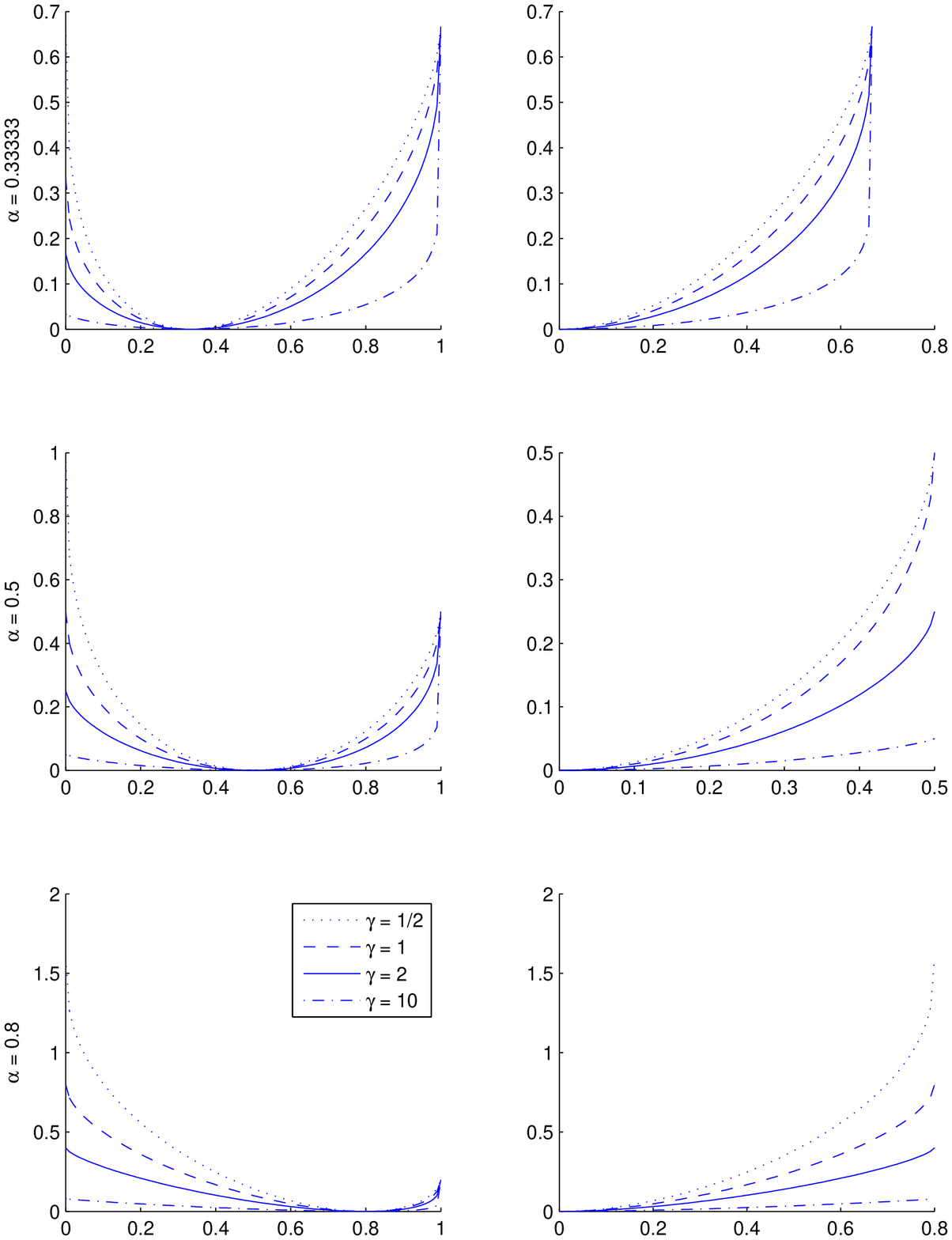}
\caption{{\bf Uneven exponential loss}.
$\HLaa$ (left column)
and $\nuLaa$ (right column) for three values of $\a$ and
four values of $\g$.
\label{fig:exp}}
\end{figure}

\subsection{Uneven sigmoid loss}
\label{sec:sig}
Finally we consider a nonconvex $\phi$, namely
the sigmoid function $\phi(t) =
1/(1+e^t)$.  For concreteness, we fix
$\g=2$ and study
$$
L(y,t) = \ind{y=1} \frac1{1+e^t} + \ind{y=-1}
\frac1{2} \frac1{1+e^{-2 t}}.
$$
General $\gamma$ will be discussed at the end.

Since $\phi$ is not convex, we cannot conclude
$L$ is CC.  In fact, we will show that
$L$ is $\a$-CC for $\a = (3+4\sqrt{2})/23
\approx 0.37639$.

Figure \ref{fig:sigloss} shows
$$
C_L(\eta,t) = \eta \frac{1}{1+e^{-t}} +
\frac{1-\eta}{2} \frac1{1+e^{2t}}
$$
as a function of $t$, for six different $\eta$.
These graphs are useful in understanding
$\CLam(\eta)$ and $C_L^*(\eta)$.
When $\eta < \frac12$, it can be shown that
$C_L(\eta,t)$ has a single local minimum
and a single local maximum.  When $\eta \ge
\frac12$, on the other hand, $C_L(\eta,t)$
is strictly decreasing.  Let
$t_-(\eta)$ denote the local minimizer when
$\eta < \frac12$.  This function can
be expressed in closed form.  See Appendix
\ref{app:sig} for these and other details.

\begin{figure}
\centering
\includegraphics[width = 1.0\textwidth]{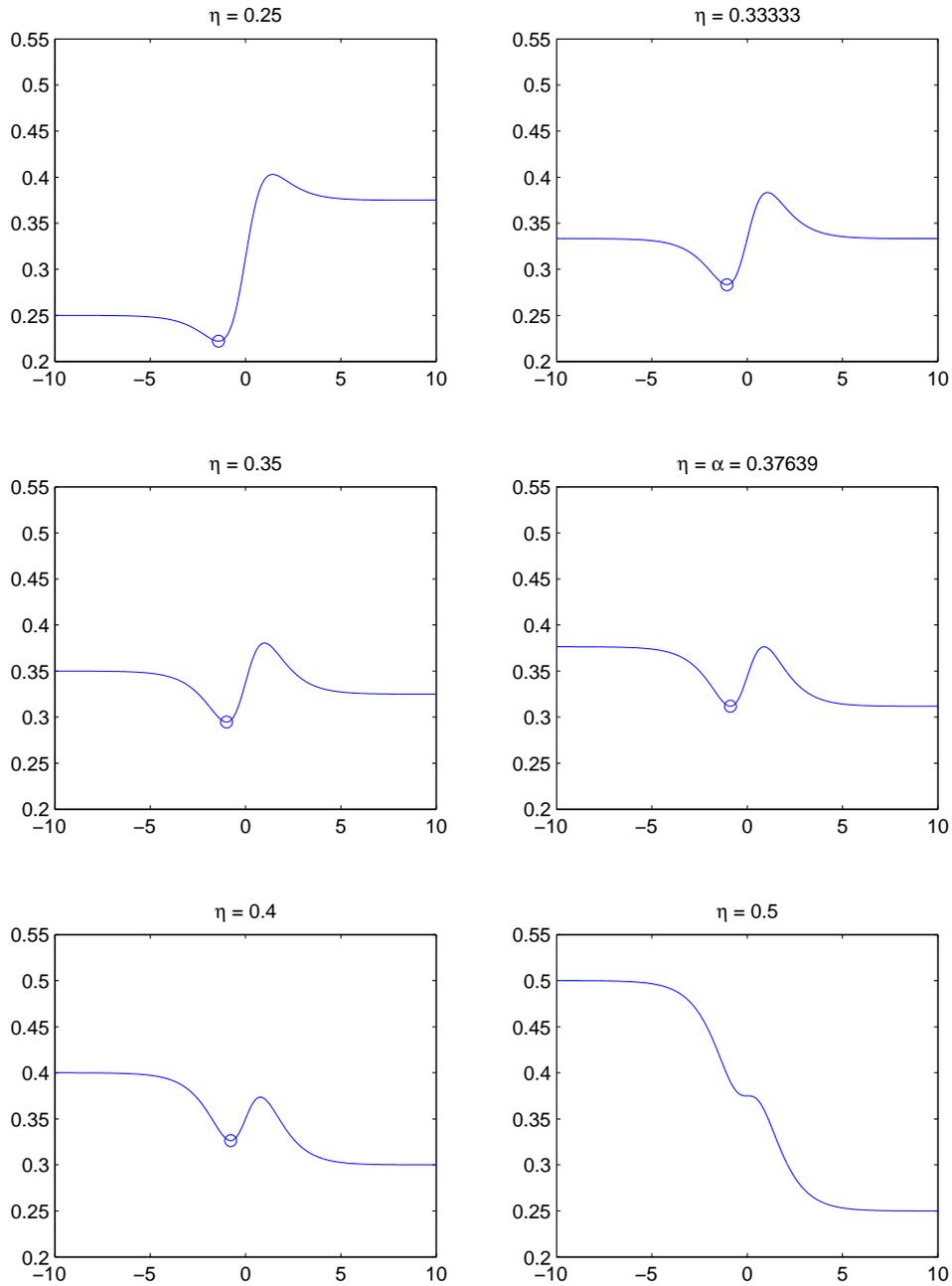}
\caption{{\bf Uneven sigmoid loss with $\g = 2$}.  $C_L(\eta,t)$
is graphed as a function of $t$ for six
values of $\eta$.  The circles indicate
$(t_-(\eta),C_L(\eta, t_-(\eta)))$.
\label{fig:sigloss}}
\end{figure}

First, we determine $C_L^*$.  The infimum of $C_L(\eta,t)$ over $t \in
\reals$
is either $C_L(\eta, t_-(\eta))$ or $C_L(\eta, \infty)
= (1-\eta)/2$.  As indicated by Figure \ref{fig:sigloss},
$C_L(\eta, t_-(\eta)) = C_L(\eta, \infty)$ when
$\eta = \a = (3 + 4\sqrt{2})/23 \approx 0.37639$.
See Appendix \ref{app:sig} for proof of this fact.
When $\eta < \a$, $C_L^*(\eta) = C_L(
\eta, t_-(\eta))$, and when $\eta \ge \a$,
$C_L^*(\eta) = C_L(\eta, \infty) = (1-\eta)/2$.
Thus,
$$
C_L^*(\eta) = \bcase
C_L(\eta,t_-(\eta)), & \eta < \a \\
\sfrac{1-\eta}2, & \eta \ge \a. \ecase
$$

Next, consider $\CLam$.  When $\eta < \a$,
$\CLam(\eta)$ is either $C_L(\eta,0) =
(1+\eta)/4$ or $C_L(\eta,\infty) = (1-\eta)/2$.
Since $\frac{1+\eta}4 < \frac{1-\eta}2 \iff
\eta < \frac13$, we have $\CLam(\eta) =
(1+\eta)/4$ for $0\le \eta \le \frac13$
and $\CLam(\eta) = (1-\eta)/2$ if
$\frac13 <\eta < \a$.  When $\eta \ge \a$,
$\CLam(\eta) = C_L(\eta,t_-(\eta))$ when
$ \a \le \eta \le \frac12$, and $\CLam(\eta) =
C_L(\eta,0) = (1+\eta)/4$ for $\eta\ge\frac12$.
In summary,
$$
\CLam(\eta) = \bcase
\sfrac{1+\eta}{4}, & 0 \le \eta \le \frac13
\mbox{ or } \eta \ge \half \\
\sfrac{1-\eta}{2}, & \third < \eta < \a \\
C_L(\eta,t_-(\eta)), & \a < \eta < \half.
\ecase
$$
Now $\HLa(\eta) =
\CLam(\eta) - C_L^*(\eta)$. See Figure \ref{fig:sigH} for
plots of these quantities. This is our first example where
$\HLa$ is not convex.

\begin{figure}
\centering
\includegraphics[width = 1.0\textwidth]{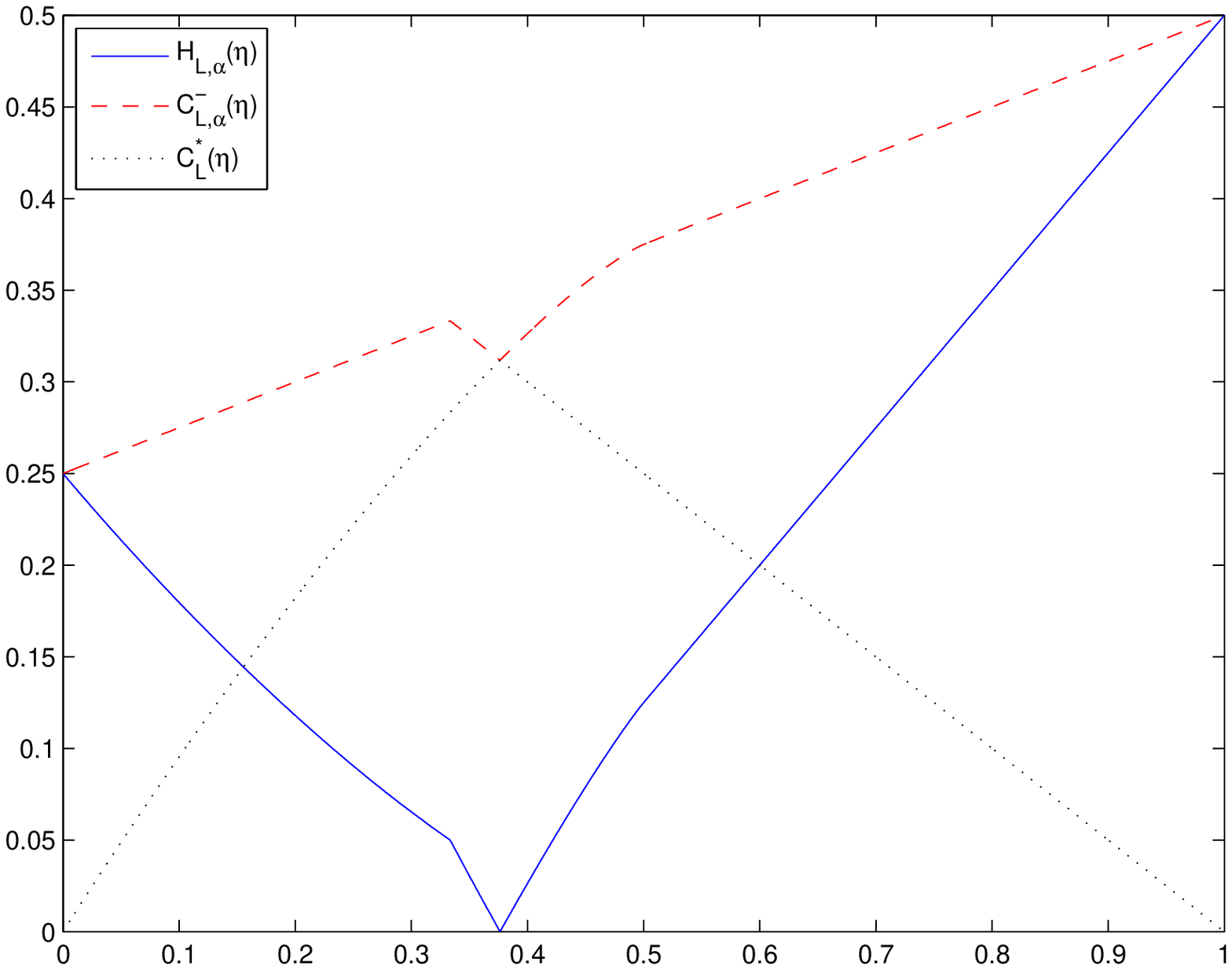}
\caption{{\bf Uneven sigmoid loss with $\g = 2$}. Plots of $\HLa$,
$\CLam$, and $C_L^*$ for $\a = (3+4\sqrt{2})/23 \approx 0.37639$.
\label{fig:sigH}}
\end{figure}

Finally, the preceding discussion can be extended to
arbitrary $\g>0$.  For every $\g > 0$
there is a unique $\a = \a(\g)\in(0,1)$ such that
\begin{equation}\label{eqn:siggen}
L(y,t) = \ind{y=1}\frac1{1+e^t} +
\ind{y=-1} \frac1{\g}
\frac1{1+e^{-\g t}}
\end{equation}
is $\a$-CC.
The relationship
between $\a$ and $\g$ is shown in Figure
\ref{fig:ag}.  Calculation of this curve is
discussed in Appendix \ref{app:sig}.  In the
appendix we show that $\a(\sfrac{1}{\g})
=1-\a(\g)$, which explains the sigmoidal
shape of $\a$ as a function\footnote{We investigated whether $\a(\g) =
1/(1+e^{c\ln\g})$ for some $c>0$,
but evidently it does not.} of $\ln \g$.

Now suppose $\a' \in (0,1)$ is the desired cost asymmetry.  By Theorem
\ref{thm:HLaa}, for $L$ in Eqn. (\ref{eqn:siggen}), $L_{1-\a(\g)}$ is CC,
and therefore $L_{(1-\a(\gamma))\a'}$ is $\a'$-CC. This is a family of
losses, indexed by $\g > 0$, all of which are $\a'$-CC.

\begin{figure}
\centering
\includegraphics[width = 1.0\textwidth]{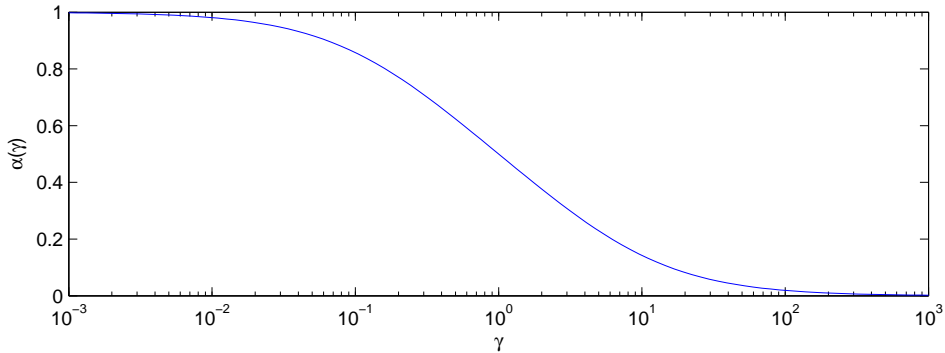}
\caption{{\bf Uneven sigmoid loss}.  Plot of the unique value of
$\a = \a(\g)$ such that the
uneven sigmoid loss with parameter $\g>0$ (Eqn.
(\ref{eqn:siggen})) is $\a$-CC.
\label{fig:ag}}
\end{figure}

\section{Discussion}
\label{sec:disc}

The results of \citet{bartlett06} concerning surrogate regret bounds and
classification calibration are generalized to label-dependent
misclassification costs and arbitrary losses.  Some differences that
emerge in this more general framework are that $\HLa(\eta)$ is in general
not symmetric about $\eta = \frac12$, and $\nuLa(\eps)$ is potentially
discontinuous at $\eps = \min(\a,1-\a)$.  The framework of
\citet{steinwart07} is also applied. Although his notion of calibration is
not always equivalent to the one adopted here, that approach based on
calibration functions nonetheless leads to the same surrogate regret
bounds.

The class of uneven margin losses are examined in some detail.  We hope
these results provide guidance to future work with such losses, as our
theory explains how to ensure $\a$-classification calibration for any
margin asymmetry parameter $\g > 0$.  For example, Adaboost is often
applied to heavily unbalanced datasets where misclassification costs are
label-dependent, such as in cascades for face detection
\citep{violajones}.  It should be possible to generalize Adaboost to have
an uneven margin (to accommodate unbalanced data) while being
$\a$-classification calibrated for any $\a \in (0,1)$.  In particular, the
uneven exponential loss from Sec. \ref{sec:exp} can be optimized by the
functional gradient descent approach.  In fact, \citet{vasconcelos07}
developed such an algorithm for the special case $\g = \a/(1-\a)$, but did
not identify the generalization to arbitrary $\g$.

Our theory also sheds light on the support vector machine with uneven 
margin. \citet{yang09} describe an implementation of this algorithm, but 
they allow for both $\beta$ and $\g$ to be free parameters. Our Corollary
\ref{cor:cvxuml} constrains $\beta = 1/\g$ for classification calibration, 
which eliminates a tuning parameter.

In closing, we mention two additional directions for future work.  First, 
an interesting problem related to uneven margin losses is that of 
surrogate tuning, which in this case is the problem of tuning the 
parameter $\g$ to a particular dataset. \citet{nock09} have recently 
described a data-driven approach to surrogate tuning of 
classification-calibrated ($\a = \frac12$) losses.  Second, our regret 
bounds should be applicable to proving the cost-sensitive consistency of 
algorithms based on surrogate losses.

\section*{Acknowledgements}

This work was supported in part by NSF Grants CCF-0830490 and CCF-0953135. 

\appendix

\section{Lemmas}
LSC and USC abbreviate lower semi-continuous and upper
semi-continuous.

\begin{lemma}
\label{lemmaX}
Let $L$ be a loss, $\a \in (0,1)$, and recall $B_\a
= \max(\a,1-\a)$.
\bd
\item[1.] (a) For any $\eta \in [0,1]$, $C_\a^*(\eta)
= C_\a(\eta, \eta-\a)$. (b) For any $\eta \in [0,1],
t \in \reals$, $C_\a(\eta,t) - C_\a^*(\eta) =
\ind{\sign(t) \ne \sign(\eta-\a)}|\eta-\a|$.
(c) $R_\a^* = R_\a(\eta-\a)$. (d) For any $f
\in \sF$,
$$
R_\a(f)-R_\a^* = E_X[
\ind{\sign(f(X)) \ne \sign(\eta(X)-\a)}|\eta(X)
-\a|].
$$
\item[2.] (a) $C_L^*(\eta)$ is concave on
$[0,1]$. (b) $\CLam(\eta)$ is concave on
$[0,\a)$ and on $(\a,1]$.
\item[3.] (a) $C_L(\eta)$ is continuous on $[0,1]$.
(b) $\CLam(\eta)$ and $\HLa(\eta)$ are continuous
on $[0,1] \backslash \{\a\}$. (c) If $L$ is
$\a$-CC, then $\CLam$ and $\HLa$ are continuous
on $[0,1]$.
\item[4.] $\HLa(\a) = \nuLa(0) = \muLa(0)
= \psiLa(0) = 0$.
\item[5.] $\nuLa$ and $\muLa$ are LSC on
$[0,B_\a]$. $\psiLa$ is continuous on $[0,B_\a]$.
\ed
\end{lemma}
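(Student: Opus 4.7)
The lemma collects several technical facts; my plan handles each in turn, with most parts reducing to direct computation or standard properties of concave, convex, and semicontinuous functions. The substantive observation is in part 3(c), which I treat via a decomposition of $\CLam$ into two auxiliary concave functions.

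Part 1 is a direct case analysis from $C_\a(\eta, t) = \eta(1-\a)\ind{t \le 0} + (1-\eta)\a\ind{t > 0}$: comparing the cases $\eta > \a$, $\eta < \a$, $\eta = \a$ (with $\sign(0) = -1$), one checks that $t = \eta - \a$ is a minimizer, giving (a); part (b) follows by directly evaluating $C_\a(\eta, t) - C_\a^*(\eta)$ when $\sign(t) \ne \sign(\eta - \a)$, and (c), (d) follow by integrating over $X$. For the remaining parts, I introduce
$$
g(\eta) := \inf_{t \le 0} C_L(\eta, t), \qquad h(\eta) := \inf_{t \ge 0} C_L(\eta, t),
$$
both infima of affine functions of $\eta$ and hence concave on $[0,1]$ and continuous on $(0,1)$. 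Continuity of $g, h$ at $\eta = 0, 1$ follows from upper semicontinuity (infimum of continuous affine functions) combined with the lower bound $g(\eta) \ge (1-\eta) g(0)$ coming from $\eta L_1(t) + (1-\eta) L_{-1}(t) \ge (1-\eta) L_{-1}(t) \ge (1-\eta) g(0)$, and a symmetric bound at $\eta = 1$. Since $\{t \le 0\} \cup \{t \ge 0\} = \reals$, $C_L^* = \min(g, h)$; unpacking $t(\eta-\a) \le 0$ gives $\CLam = h$ on $[0, \a)$ and $\CLam = g$ on $(\a, 1]$, yielding parts 2(b), 3(a), and 3(b). For part 4, the constraint is vacuous at $\eta = \a$, so $\CLam(\a) = C_L^*(\a)$ and $\HLa(\a) = 0$; then $\nuLa(0) = \muLa(0) = 0$ are immediate, and $\psiLa(0) = 0$ because $\nuLa \ge 0$ forces $\nuLa^{**} \ge 0$ while $\nuLa^{**} \le \nuLa$ yields $\nuLa^{**}(0) \le 0$.

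Part 3(c) is the heart of the proof. Using $C_L^* = \min(g, h)$ together with the piecewise formula for $\CLam$, one obtains
$$
\HLa(\eta) = \bcase (g(\eta) - h(\eta))_+, & \eta > \a, \\ (h(\eta) - g(\eta))_+, & \eta < \a, \\ 0, & \eta = \a. \ecase
$$
Under the $\a$-CC hypothesis, $\HLa > 0$ on $[0,1]\setminus\{\a\}$ forces $g > h$ on $(\a, 1]$ and $g < h$ on $[0, \a)$. Continuity of $g - h$ at $\a$ then forces $g(\a) = h(\a)$, so $C_L^*(\a) = g(\a) = h(\a) = \CLam(\a)$, and continuity of $g, h$ transfers to continuity of $\CLam$ and $\HLa$ on all of $[0,1]$.

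Part 5 is by semicontinuity arguments. By (\ref{eqn:nu1})-(\ref{eqn:nu2}), $\nuLa(\eps)$ is a minimum of $\HLa(\a \pm \eps)$ (with one term dropping out past the transition $\min(\a, 1-\a)$), so $\nuLa$ inherits continuity from $\HLa$ on the relevant open intervals; at $\eps = 0$, LSC is trivial from $\nuLa \ge 0 = \nuLa(0)$, and at the transition, dropping a term can only raise the minimum, preserving LSC. The function $\muLa$ is nondecreasing in $\eps$ (inf over a shrinking set), so LSC is equivalent to left-continuity, which I would prove by taking $\eps_n \uparrow \eps$, picking approximate minimizers $\eps_n' \in [\eps_n, B_\a]$ with $\nuLa(\eps_n') \le \muLa(\eps_n) + 1/n$, extracting a subsequential limit $\eps^* \ge \eps$, and combining LSC of $\nuLa$ with $\muLa(\eps^*) \ge \muLa(\eps)$ to close the argument. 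Finally, $\psiLa = \nuLa^{**}$ is convex and LSC by construction; convex LSC functions on $[0, B_\a]$ are continuous on the interior, and continuity at the endpoints follows from convexity combined with $\psiLa(0) = 0$ via the sublinear bound $\psiLa(\eps) \le (\eps/\eps_0)\psiLa(\eps_0)$ at $0$ and a standard argument at $B_\a$. The main obstacle is part 3(c); once the decomposition $\HLa = (\pm(g-h))_+$ is noted, the rest of part 3(c) and the remaining parts reduce to routine applications of standard tools.
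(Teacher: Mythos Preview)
Your proposal is correct, and for part 3(c) it takes a genuinely different and more elegant route than the paper. The paper establishes upper semicontinuity of $\CLam$ at $\a$ by a direct, page-long $\eps$--$\delta$ argument: it compares $\CLam(\eta)$ to $C_L^*(2\a-\eta)$, selects a near-minimizer $t^*$ for the latter that also satisfies $t^*(\eta-\a)\le 0$, and then bounds $\Lp(t^*)+\Lm(t^*)$ by $M/\delta_\a$ via $C_L(2\a-\eta,t^*)\le C_L(2\a-\eta,0)$. Your decomposition $C_L^* = \min(g,h)$ with $\CLam = g$ on $(\a,1]$ and $\CLam = h$ on $[0,\a)$ bypasses all of this: once $g,h$ are shown continuous on $[0,1]$, the $\a$-CC hypothesis forces $g>h$ on one side of $\a$ and $g<h$ on the other, so continuity of $g-h$ pins down $g(\a)=h(\a)$, and continuity of $\CLam$ at $\a$ is immediate. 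This is shorter and more conceptual; the paper's argument, by contrast, never isolates $g$ and $h$ and instead works directly with approximate minimizers. For parts 1, 2, 3(a)(b), 4 your arguments are essentially the same as the paper's; for part 5 the paper handles $\muLa$ by invoking its separate Lemma~\ref{lemmaZ} (which shows $\dt(\eps)=\inf_{\eps'\ge\eps}\d(\eps')$ is LSC whenever $\d$ is), whereas you give the same compactness/LSC argument inline, and the paper leaves continuity of $\psiLa$ implicit while you spell out the convexity-plus-LSC reasoning.
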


\begin{proof}
{\bf 1.} For $\eta \in [0,1]$, $C_\a(\eta,t)
= (1-\a)\eta\ind{t\le0} + \a(1-\eta)\ind{t>0}$ is minimized by any $t$
such that
$\sign(t) = \sign((1-\a)\eta - \a(1-\eta))
= \sign(\eta - \a)$. Therefore $C_\a(\eta,
\eta-\a) = C_\a^*$. This gives (a). It also
implies
\beas
\lefteqn{C_\a(\eta,t) - C_\a^*(\eta)} \\
&=& (1-\a)\eta\ind{t\le0} + \a(1-\eta)\ind{t>0}
- [(1-\a)\eta\ind{\eta\le\a} + \a(1-\eta)
\ind{\eta>\a}] \\
&=& \ind{\sign(t) \ne \sign(\eta-\a)}|\eta-\a|,
\eeas
which is (b). Part (c) now follows from (a) and
$R_\a^* = E_X[C_\a^*(\eta(X))] =
E_X[C_\a(\eta(X), \eta(X)-\a)] = R_\a(
\eta-\a)$, while (d) follows from (b) and
\beas
R_\a(f)-R_\a^* &=& E_X[C_\a(\eta(X), f(X))
-C_\a^*(\eta(X))]  \\
&=& E_X[\ind{\sign(f(X))\ne
\sign(\eta(X)-\a)}|\eta(X)-\a|].
\eeas

{\bf 2.} Since $C_L^*(\eta) = \inf_{t\in\reals}
\eta\Lp(t) + (1-\eta)\Lm(t)$, it is the
infimum of affine functions and therefore concave.
For $\eta < \a$, $\CLam(\eta) = \inf_{t\ge 0}
C_L(\eta,t)$ which is also concave by the same
reasoning. A similar argument applies when $\eta > \a$.

{\bf 3.} Since $C_L^*(\eta)$ is concave
on $[0,1]$, it is continuous on $(0,1)$ by
Theorem 10.1 of \citet{rock70}. By Theorem 10.2
of the same, $C_L^*$ is LSC at 0 and 1.
Let us argue that $C_L^*$ is USC at 1, the case
of 0 being similar. Thus, let $\eps > 0$ and let $t_\eps
\in \reals$ such that $\Lp(t_\eps) \le
C_L^*(1) + \frac{\eps}2$. If $\Lm(t_\eps) = 0$, then
for any $\eta \in [0,1)$, $C_L^*(\eta) \le C_L(\eta,t_\eps) = \eta
\Lp(t_\eps)
\le \Lp(t_\eps) \le C_L^*(1) + \eps$. Suppose $\Lm(t_\eps)
> 0$.
If $\eta$ is such that
$1-\frac{\eps}{2\Lm(t_\eps)} \le \eta < 1$, then
$C_L^*(\eta) \le \eta \Lp(t_\eps) + (1-\eta)
\Lm(t_\eps) \le C_L^*(1) + \eps$. Thus $C_L^*$ is USC at 1. This
establishes (a).

For (b), continuity of $\CLam$ on $[0,1]
\backslash \{\a\}$ follows by a similar argument
as (a). Continuity of $\HLa$ then follows immediately.

It remains to show that $\CLam$, and hence $\HLa$,
is continuous at $\a$ when $L$ is $\a$-CC.
First note that $\CLam$ is LSC at $\a$
because $\CLam(\a) = C_L^*(\a)$, $\CLam(\eta)
\ge C_L^*(\eta)$ for all $\eta \in [0,1]$,
and from parts (a) and (b).

We now show $\CLam$ is USC at $\a$ when $L$
is $\a$-CC. Let $\eps > 0$. Since $C_L^*$ is
continuous at $\a$, there exists $\delta' > 0$ such
that $|C_L^*(\eta)-C_L^*(\a)| < \frac{\eps}3$
whenever $|\eta - \a| < \delta'$. Let $\delta_\a
= \frac12 \min(\a,1-\a), M = \max(\Lp(0),
\Lm(0))$, and set $\delta = \min(\delta', \delta_\a,
\frac{\eps}3 \cdot \frac{\delta_\a}{2M})$.
Now suppose $|\eta - \a| < \delta$, $\eta \ne \a$.
Then
\beas
\CLam(\eta)-\CLam(\a) &=& \CLam(\eta) -
C_L^*(2\a-\eta) + C_L^*(2\a-\eta) -
C_L^*(\a) \\
&\le& \CLam(\eta)-C_L^*(2\a-\eta) + \frac{\eps}3,
\eeas
since $|(2\a-\eta)-\a| = |\eta-\a| < \delta
\le \delta'$. Since $L$ is $\a$-CC,
there exists $t^*$, depending possibly on $\eta$ and
$\eps$, such that $t^*((2\a-\eta)-\a) \ge 0$
and $C_L(2\a-\eta,t^*) \le C_L^*(2\a-\eta) +
\frac{\eps}3$. We may further stipulate
$C_L(2\a-\eta,t^*) \le C_L(2\a-\eta,0)$ which will
be needed later. Notice $t^*((2\a-\eta)-\a) \ge 0
\iff t^*(\eta-\a) \le 0$, which is also
used later. Now $\CLam(\eta)-C_L^*(2\a-\eta)
\le \CLam(\eta)-C_L(2\a-\eta,t^*)
+ \frac{\eps}3$.  Thus far we have shown
$\CLam(\eta)-\CLam(\a) \le \CLam(\eta)
-C_L(2\a-\eta,t^*) + \frac{2\eps}3$
for $|\eta-\a| < \delta, \eta \ne \a$.

Now consider
\beas
\lefteqn{\CLam(\eta)-C_L(2\a-\eta,t^*)
= \inf_{\topp} C_L(\eta,t) - C_L(2\a-\eta,t^*)}
\\
&\le& C_L(\eta,t^*) - C_L(2\a-\eta,t^*) \\
&=&\eta \Lp(t^*) + (1-\eta)\Lp(t^*) -
[(2\a-\eta)\Lp(t^*) + (1-(2\a-\eta))\Lm(t^*)] \\
&=& 2[\Lp(t^*)(\eta-\a) + \Lm(t^*)(\a-\eta)]
\\
&\le& 2[\Lp(t^*) + \Lm(t^*)]|\eta-\a|.
\eeas
To bound this quantity, observe
\beas
M&=& \max(\Lp(0), \Lm(0)) \\
&\ge& C_L(2\a-\eta,0) \\
&\ge& C_L(2\a-\eta,t^*) \\
&=& (2\a-\eta)\Lp(t^*) + (1-(2\a-\eta))
\Lm(t^*) \\
&\ge& \frac{\a}2\Lp(t^*) + \frac{1-\a}2
\Lm(t^*) \\
&\ge& \delta_\a(\Lp(t^*) + \Lm(t^*)).
\eeas
To see the next to last inequality, recall
$|\eta-\a| < \delta \le \delta_\a = \frac12
\min(\a,1-\a)$. Then $2\a-\eta
= \a + (\a-\eta) \ge \frac{\a}2$ and
$1-(2\a-\eta) = 1-\a + (\eta-a) \ge
\frac{1-\a}2$
We now have $\CLam(\eta)-C_L(2\a-\eta,t^*)
\le \frac{2M}{\delta_\a}|\eta-\a| < \frac{\eps}3$.

We have shown that for all $\eps > 0$, there exist $\d > 0$ such that for
all
$\eta \in [0,1]$ with $|\eta - \a| < \d$ and
$\eta \ne \a$,
$$
\CLam(\eta) - \CLam(\a) < \eps.
$$
Therefore $\CLam$ is USC, and hence continuous, at $\a$.

{\bf 4.} $\HLa(\a)=0$ because when $\eta=\a$,
the infimum defining $\CLam(\a)$ is unrestricted.
From this we have $\nuLa(0) =
\HLa(\a)=0$. Since $0 \le \muLa(0) \le
\nuLa(0)$ we deduce $\muLa(0)=0$.
Finally, $\psiLa(0)=0$ because $\psiLa =
\nuLa^{**}$, $\nuLa(0)=0$, and $\nuLa$ is
nonnegative.

{\bf 5.} From 3, $\HLa$ is continuous except possibly at $\a$.
Therefore $\nuLa$ is continuous except possibly at $0$ and
$b_\a:=\min(\a,1-\a)$. $\nuLa$ is LSC at $0$ because $\nuLa(0)=0$
and $\nuLa$ is nonnegative. $\nuLa$ is LSC at $b_\a$ because
$\nuLa(b_\a^-)=\nuLa(b_\a) \le \nuLa(b_\a^+)$, which follows from
the definition of $\nuLa$. Now lower semi-continuity of $\muLa$
follows from Lemma \ref{lemmaZ}.
\end{proof}

The following result generalizes Lemma A.7 of \citet{steinwart07}.
\begin{lemma}
\label{lemmaZ}
Let $\d:[0,B] \to [0,\infty)$ be a lower
semi-continuous function with $\d(0)=0$, and define
$\dt(\eps) = \inf_{\eps'\ge\eps} \d(\eps')$.
Then $\dt$ is lower semi-continuous and
$\dt^{**} = \d^{**}$.
\end{lemma}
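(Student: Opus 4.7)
The plan is to prove three things in sequence: (i) $\dt$ is nondecreasing with $\dt(0)=0$; (ii) $\dt$ is lower semi-continuous; (iii) $\dt^{**} = \d^{**}$. Fact (i) is immediate from the definition, since $\dt(\eps)$ is an infimum over the sets $\{\eps' : \eps' \ge \eps\}$, which shrink as $\eps$ grows, so $\dt$ is nondecreasing; and $0 \le \dt(0) \le \d(0) = 0$ forces $\dt(0)=0$.

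For LSC of $\dt$, since $\dt$ is nondecreasing it suffices to establish left-continuity at each $\eps \in (0,B]$ (LSC at $\eps=0$ is automatic because $\dt \ge 0 = \dt(0)$). Fix $\eps_n \uparrow \eps$; monotonicity gives $\lim \dt(\eps_n) \le \dt(\eps)$. For the reverse, pick any $c > \lim \dt(\eps_n)$ and, for each $n$, choose $\eps_n' \ge \eps_n$ with $\d(\eps_n') < c$. If infinitely many $\eps_n'$ satisfy $\eps_n' \ge \eps$, then directly $\dt(\eps) \le c$. Otherwise eventually $\eps_n' \in [\eps_n, \eps)$, so $\eps_n' \to \eps$ by squeeze; the LSC of $\d$ at $\eps$ then yields $\d(\eps) \le \liminf \d(\eps_n') \le c$, and hence $\dt(\eps) \le \d(\eps) \le c$. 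Letting $c \downarrow \lim \dt(\eps_n)$ finishes.

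For the equality of biconjugates, one direction is immediate: since $\dt \le \d$, any convex LSC minorant of $\dt$ is also a convex LSC minorant of $\d$; applied to $\dt^{**}$ itself this gives $\dt^{**} \le \d^{**}$. For the reverse inequality $\d^{**} \le \dt^{**}$, the strategy is to show $\d^{**} \le \dt$ and then invoke the fact that $\d^{**}$ is convex and LSC. First, $\d \ge 0$ means the constant $0$ is a convex LSC minorant of $\d$, so $\d^{**} \ge 0$; combined with $\d^{**}(0) \le \d(0)=0$ this yields $\d^{**}(0)=0$. Next I would show $\d^{**}$ is nondecreasing on $[0,B]$: for $0 \le \eps_1 \le \eps_2 \le B$, writing $\eps_1 = (1-\lambda)\cdot 0 + \lambda \eps_2$ with $\lambda = \eps_1/\eps_2$ and using convexity together with $\d^{**}(0)=0$ gives $\d^{**}(\eps_1) \le \lambda\, \d^{**}(\eps_2)$, which combined with $\d^{**}(\eps_2) \ge 0$ yields $\d^{**}(\eps_1) \le \d^{**}(\eps_2)$. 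With monotonicity in hand, for any $\eps$ and any $\eps' \ge \eps$, $\d^{**}(\eps) \le \d^{**}(\eps') \le \d(\eps')$; taking the infimum over such $\eps'$ gives $\d^{**}(\eps) \le \dt(\eps)$. Since $\d^{**}$ is a convex LSC minorant of $\dt$, we conclude $\d^{**} \le \dt^{**}$.

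The main obstacle is the left-continuity argument in step (ii), where the generic case must pass a limit through $\d$ using its lower semi-continuity; once that is in place, everything else in the biconjugate identity reduces to the elementary observation that a convex nonnegative function vanishing at $0$ is nondecreasing.
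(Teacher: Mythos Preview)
Your proof is correct. The lower semi-continuity argument in (ii) is essentially the paper's argument, just phrased directly as left-continuity rather than by contradiction: both hinge on the same observation that a near-optimizer $\eps_n'$ for $\dt(\eps_n)$ either already lies in $[\eps,B]$ (witnessing $\dt(\eps)$ directly) or is squeezed into $[\eps_n,\eps)$, where lower semi-continuity of $\d$ at $\eps$ takes over.

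For the biconjugate identity (iii), your route differs genuinely from the paper's. The paper works geometrically with epigraphs: it shows $\co\Epi\dt = \co\Epi\d$ by exhibiting, for each $\eps\in(0,B)$, a point $\eps'\in[\eps,B]$ where $\d$ attains $\dt(\eps)$ (using lower semi-continuity of $\d$ on a compact set), and then writes $(\eps,\dt(\eps))$ explicitly as a convex combination of $(0,0)$ and a point above $(\eps',\d(\eps'))$ in $\Epi\d$. You instead use the characterization of $g^{**}$ as the largest convex LSC minorant of $g$, together with the elementary fact that a nonnegative convex function vanishing at $0$ is nondecreasing, to deduce $\d^{**}\le\dt$ directly. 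Your argument is shorter and avoids the explicit epigraph construction; the paper's is more concrete and makes visible exactly which convex combinations are doing the work. Both are complete and neither needs anything the other provides.
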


\begin{proof}
Suppose $\dt$ is not LSC at $\eps \in [0,1]$.
Then there exists $\tau > 0$ and $\eps_1, \eps_2,
\ldots \to \eps$ such that for $i$ sufficiently large,
$\dt(\eps_i) \le \dt(\eps)-\tau$. Since $\dt$
is nondecreasing, we may assume $\eps_i < \eps$
for all $i$. If $\dt(\eps_i) \le \dt(\eps)-\tau$,
then there exists $\eps_i' \in [\eps_i,\eps)$
such that $\d(\eps_i') \le \dt(\eps)-\frac{\tau}2 \le \d(\eps) -
\frac{\tau}2$.
But $\eps_i' \to \eps$, which implies $\d$ is
not LSC at $\eps$, a contradiction.

To show $\dt^{**} = \d^{**}$, we need to show
$\overline{\co \Epi \dt} = \overline{\co \Epi \d}$. It
suffices to show $\co \Epi \dt = \co \Epi \d$.
Since $\dt \le \d$, clearly $\Epi \dt \subset
\Epi \d$ and therefore $\co \Epi \dt \subset
\co \Epi \d$. For the reverse inclusion, it suffices
to show $(\eps, \dt(\eps)) \in \co \Epi \d$
for all $\eps \in [0,B]$. We may assume
$\eps \in (0,B)$ since $\d(0) = \dt(0) = 0$
and $\d(B) = \dt(B)$. Thus let $\eps \in
(0,B)$. Since $\d$ is LSC, it achieves its
infimum over a compact set, and hence there exists
$\eps' \in [\eps,B]$ such that $\dt(\eps) =
\d(\eps')$. Since $(0,0), (\eps', \frac{\eps'}{\eps}
\dt(\eps)) \in \Epi(\d)$, it follows that
$$
\frac{\eps}{\eps'}(\eps', \frac{\eps'}{\eps} \dt(\eps)) +
\frac{\eps'-\eps}{\eps'}(0,0)
= (\eps, \dt(\eps)) \in\co\Epi\d,
$$
as was to be shown
\end{proof}

\section{Uneven Sigmoid Loss Details}
\label{app:sig}

We present a closed form
expression for $t_-(\eta)$, and
describe how to calculate
$\a(\g)$ from Sec. \ref{sec:sig}.

$t_-(\eta)$ is the value of $t$ that
satisfies $t<0$ and
$$
0 = \frac{\partial}{\partial t} C_L(\eta,t)
= \eta \phi'(t) - (1-\eta) \phi'(-2t).
$$
Using $\phi'(t) = -e^t/(1+e^t)^2$
and substituting $z =  e^t$, $z$ must
satisfy $z \in (0,1)$ and
$$
\eta \frac{z}{(1+z)^2} = (1-\eta)
\frac{z^{-2}}{(1+z^{-2})^2},
$$
or equivalently, $z \in (0,1)$ is a solution
of the quartic equation
\beas
0 &=& \eta z^4 - (1-\eta)z^3
+ 2(2\eta-1)z^2 - (1-\eta)z + \eta \\
&=& z^2(\eta z^2 - (1-\eta)z + 2(2\eta-1)
- (1-\eta)z^{-1} + \eta z^{-2}).
\eeas
Note $z=0$ is not the desired solution,
as it corresponds to $t = -\infty$.
Let $w = z + z^{-1}$, and observe
$w^2 = z^2 + 2 + z^{-2}$.  Then $z$
must satisfy
\beas
0 &=& \eta (z^2 + z^{-2}) - (1-\eta)
(z + z^{-1}) + 2(2\eta-1) \\
&=& \eta (w^2-2) - (1-\eta)w + 2(2\eta-1) \\
&=& \eta w^2 - (1-\eta)w + 2\eta-1.
\eeas
Therefore
$$
w=\frac{1-\eta + \sqrt{(1-\eta)^2 - 8\eta(\eta-1)}}
{2\eta}.
$$
We take the positive sign because only it gives
a positive $z$.  Now $z$ can be recovered from $w$.
Since $z^2-wz+1 = 0$ we get
$$
z=\frac{w-\sqrt{w^2-4}}2.
$$
We take the negative sign as we are seeking the smaller
of the two critical points.  It can
be shown (with algebra) that $w^2>4 \iff
\eta < \frac12$.  Finally, we have $t_-(\eta)
= \ln z$.

We now turn to characterization of $\a(\g)$.
Assume $\g > 1$.  $\a(\g)$ is the
value of $\eta$ such that
$$
\frac{1-\eta}{\g} = C_L(\eta,\infty)
= C_L(\eta,t) = \frac{\eta}{1+e^t} +
\frac{1-\eta}{\g} \frac1{1+e^{-\g t}}
$$
is satisfied by a unique $t$
with $-\infty < t < 0$.  Since $ C_L (\eta,
-\infty) = C_L(\eta,\infty) \iff \eta =
\frac1{1+\g}$, we must have $\eta >
\frac1{1+\g}$.  After substituting $z = e^t$
and simplifying, we seek
$\eta > \frac1{1+\g}$ such that
$$
\eta \g z^{\g} - (1-\eta)z + (\eta \g -1 +\eta) = 0
$$
is satisfied for a unique $z \in (0,1)$.  That is,
we need the curves
$p_\eta(z) := \eta \g z^\g$ and $q_\eta(z) :=
(1-\eta)z - (\eta \g -1+\eta)$ to
intersect exactly once on $(0,1)$.  Since
$p_\eta$ is a strictly increasing convex function and $q_\eta$ is a line
with positive
slope, this can happen in one of three ways: (a)
$p_\eta(0) > q_\eta(0)$ and $p_\eta(1) <
q_\eta(1)$, (b) $p_\eta(0) < q_\eta(0)$
and $p_\eta(1) > q_\eta(1)$, or (c) $q_\eta$
is tangent to $p_\eta$ at some $z\in(0,1)$.
(a) requires $\eta > 1/(1+\g)$ and $\eta < 1/(1+\g)$, which is impossible.
Similarly, (b) is impossible.
Thus, we must
have $p_{\eta}'(z) = q_{\eta}'(z)$
for some $z\in(0,1)$.

Summarizing up to this point, we seek $\eta
> \frac1{1+\g}$ and $z\in(0,1)$
such that
\begin{equation}
\label{eqn:ag1}
\eta \g z^\g = (1-\eta)z - (\eta\g-1+\eta)
\end{equation}
and
\begin{equation}
\label{eqn:ag2}
\eta \g^2 z^{\g-1} = 1-\eta.
\end{equation}
Dividing (\ref{eqn:ag1}) by (\ref{eqn:ag2})
and solving for $z$ gives
\begin{equation}
\label{eqn:agz}
z = \frac{\eta \g -1 + \eta}{1-\eta}
\frac{\g}{\g-1}.
\end{equation}
Substituting (\ref{eqn:agz}) into (\ref{eqn:ag2})
yields
\begin{equation}
\label{eqn:agbi}
\eta \left[\g^2 \left(\frac{\eta \g-1 + \eta}{1-\eta}
\frac{\g}{\g-1} \right)^{\g-1} + 1\right] =1.
\end{equation}
When $\g=2$, this simplifies to a quadratic
equation, leading to $\a(2) =
(3 + 4\sqrt{2})/23$.  More generally, notice
that for $\eta > \frac1{1+\g}$, the left-hand
side of (\ref{eqn:agbi}) is strictly increasing, and
thus $\eta=\a(\g)$ can be found with a bisection search.
The case $\g=1$ was treated by
\citet{bartlett06}, yielding $\a(1)=\frac12$.
When $\g<1$ we may appeal to symmetry.
Let us write $C_L^\g(\eta,t)$ to indicate the
dependence of $C_L$ on $\g$.  It is easily
shown that $C_L^{1/\g}(\eta, \g t)
= \g C_L^\g(1-\eta,-t)$, from
which it follows that $\a(\frac1{\g})
= 1-\a(\g)$.

\appendix

\bibliographystyle{plainnat}
\bibliography{surrLDarxiv}

\end{document}